\documentclass[11pt,letterpaper,english]{article}

\usepackage[T1]{fontenc}
\usepackage[utf8]{inputenc}
\usepackage[english]{babel}
\usepackage[margin=1in]{geometry}
\usepackage{natbib}
\usepackage{authblk}
\usepackage{macros}

\usepackage{times}
\allowdisplaybreaks
% The following packages will be automatically loaded:
% amsmath, amssymb, natbib, graphicx, url, algorithm2e
\begin{document}

\title{On TD(0) with function approximation: Concentration bounds and a centered variant with exponential convergence}

\author[1]{Nathaniel Korda\thanks{nathaniel.korda@eng.ox.ac.uk}}
\author[2]{Prashanth L.A.\thanks{prashla@isr.umd.edu}}

\affil[1]{\small MLRG, University of Oxford, UK}
\affil[2]{\small Institute for Systems Research, University of Maryland, USA}

\renewcommand\Authands{ and }

\date{}
\maketitle
%%%%%%%%%%%%%%%%%%%%%%%%%%%%%%%%%%%%%%%%%%%%%%%%%%%%%%%%%%%%%%
%%%%%%%%%%%%%%%%%%%%%%%%%%%%%%%%%%%%%%%%%%%%%%%%%%%%%%%%%%%%%%
%%%%%%%%%%%%%%%%%%%%%%%%%%%%%%%%%%%%%%%%%%%%%%%%%%%%%%%%%%%%%%
%%%%%%%%%%%%%%%%%%%%%%%%%%%%%%%%%%%%%%%%%%%%%%%%%%%%%%%%%%%%%%
\begin{abstract} 
We provide non-asymptotic bounds for the well-known temporal difference learning algorithm TD(0) with linear function approximators. These include high-probability bounds as well as bounds in expectation. Our analysis suggests that a step-size inversely proportional to the number of iterations cannot guarantee optimal rate of convergence unless we assume (partial) knowledge of the stationary distribution for the Markov chain underlying the policy considered. We also provide bounds for the iterate averaged TD(0) variant, which gets rid of the step-size dependency while exhibiting the optimal rate of convergence. Furthermore, we propose a variant of TD(0) with linear approximators that incorporates a centering sequence, and  establish that it exhibits an exponential rate of convergence in expectation. We demonstrate the usefulness of our bounds on two synthetic experimental settings.
\end{abstract} 

%%%%%%%%%%%%%%%%%%%%%%%%%%%%%%%%%%%%%%%%%%%%%%%%%%%%%%%%%%%%%%
%%%%%%%%%%%%%%%%%%%%%%%%%%%%%%%%%%%%%%%%%%%%%%%%%%%%%%%%%%%%%%
%%%%%%%%%%%%%%%%%%%%%%%%%%%%%%%%%%%%%%%%%%%%%%%%%%%%%%%%%%%%%%
%%%%%%%%%%%%%%%%%%%%%%%%%%%%%%%%%%%%%%%%%%%%%%%%%%%%%%%%%%%%%%
%%%%%%%%%%%%%%%%%%%%%%%%%%%%%%%%%%%%%%%%%%%%%%%%%%%%%%%%%%%%%%
\section{Introduction}
Many stochastic control problems can be cast within the framework of Markov decision processes (MDP).
Reinforcement learning (RL) is a popular approach to solve MDPs, when the underlying transition mechanism is unknown.
An important problem in RL is to estimate the \textit{value function} $V^\pi$ for a given stationary policy $\pi$.
We focus on discounted reward MDPs with a high-dimensional state space $\S$.
In this setting, one can only hope to estimate the value function approximately and this constitutes the {\em policy evaluation} step in several approximate policy iteration methods, e.g. actor-critic algorithms  \citep{konda2003onactor}, \citep{bhatnagar2009natural}. 

\textbf{Temporal difference learning}  is a well-known policy evaluation algorithm that is both online and works with a single sample path obtained by simulating the underlying MDP. 
However, the classic TD(0) algorithm uses full-state representations (i.e. it stores an entry for each state $s \in \S$) and hence, suffers from the curse of dimensionality.
A standard trick to alleviate this problem is to approximate the value function within a linearly parameterized space of functions, i.e., $V^\pi(s) \approx \theta\tr \phi(s)$. Here $\theta$ is a tunable parameter and $\phi(s)$ is a column feature vector with dimension $d<<|S|$.
This approximation allows for efficient implementation of TD(0) even on large state spaces.

The update rule for TD(0) that incorporates linear function approximators is as follows: Starting with an arbitrary $\theta_0$, 
\begin{align}
\theta_{n+1} = \theta_{n} + \gamma_{n} \big(r(s_n,\pi(s_n)) + \beta \theta_{n}\tr \phi(s_{n+1}) 
- \theta_{n}\tr \phi(s_n)&\big)\phi(s_{n}).\label{eq:td-update-intro}
\end{align}
In the above, the quantities $\gamma_n$ are \emph{step sizes}, chosen in advance ,and satisfying standard stochastic approximation conditions (see assumption (A5)).
Further, $r(s,a)$ is the reward recieved in state $s$ on choosing action $a$, $\beta \in (0,1)$ is a discount factor, and $s_n$ is the state of the MDP at time $n$. 

\paragraph{Asymptotic convergence of TD(0).} In \citep{tsitsiklis1997analysis}, the authors establish that $\theta_n$ governed by \eqref{eq:td-update-intro} converges almost surely to the fixed point, $\theta^*$, of the {\em projected Bellman equation} given by
  \begin{align}
  \Phi \theta^* = \Pi \T^\pi(\Phi \theta^*).
  \label{eq:td-fixed-point-intro}
\end{align}
In the above, $\T^\pi$ is the Bellman operator, $\Pi$ is the orthogonal projection onto the linearly parameterized space within which we approximate the value function, and $\Phi$ is the feature matrix with rows $\phi(s)\tr, \forall s \in \S$ denoting the features corresponding to state $s \in \S$ (see Section \ref{sec:background} for more details).
Let $P$ denote the transition probability matrix with components $p(s,\pi(s),s')$ that denote the probability of transitioning from state $s$ to $s'$ under the action $\pi(s)$. Let $r$ be a vector with components $r(s,\pi(s))$, and $\Psi$ be a diagonal matrix whose diagonal forms the stationary distribution (assuming it exists) of the Markov chain for the underlying policy $\pi$.
Then, $\theta^*$ can be written as the solution to the following system of equations (see Section 6.3 of \citep{bertsekas2011approximate})
\begin{align}
\label{eq:td-system-of-equations}
\hspace{-0.5em}A \theta^* = b, \textrm{ where } A = \Phi\tr \Psi(I- \beta P)\Phi \textrm{ and } b=\Phi\tr \Psi r.
\end{align}

\paragraph{Our work.}
We derive non-asymptotic bounds on $\l \theta_n - \theta^* \r$, both in high-probability and in expectation, to quantify the rate of convergence of  TD(0) with linear function approximators. To the best of our knowledge, there are no non-asymptotic bounds for TD(0) with function approximation, while there are asymptotic convergence and rate results available.

\textbf{\textit{Finite time analysis is challenging because}}:\\
  \begin{inparaenum}[\bfseries(1)]
   \item The asymptotic limit $\theta^*$ is the fixed point of the Bellman operator, which assumes that the underlying MDP is begun from the stationary distribution $\Psi$ (whose influence is evident in \eqref{eq:td-system-of-equations}). However, the samples provided to the algorithm come from simulations of the MDP that are not begun from $\Psi$. This is a problem for a finite time analysis, since we do not know exactly the number of steps after which mixing of the underlying Markov chain has occurred, and the distribution of the samples that TD(0) sees has become the stationary distribution. Moreover, an assumption on this mixing rate amounts to assuming (partial) knowledge of the transition dynamics of the Markov chain underlying the policy $\pi$.\\
   \item Standard results from stochastic approximation theory suggest that in order to obtain the optimal rate of convergence for a step size choice of $\gamma_n = c/(c+n)$, one has to chose the constant $c$ carefully. In the case of TD(0), we derive this condition and point out the optimal choice for $c$ requires knowledge of the mixing rate of the underlying Markov chain for policy $\pi$.\\ 
  \end{inparaenum}
We handle the first problem by establishing that under a mixing assumption (the same as that used to establish asymptotic convergence for TD(0) in \citep{tsitsiklis1997analysis}), the mixing error can be handled in the non-asymptotic bound. This assumption is broad enough to encompass a reasonable range of MDP problems. We alleviate the second problem by using iterate averaging. 

\textbf{\textit{Variance reduction.}} One inherent problem with iterative schemes that use a single sample to update the iterate at each time step, is that of variance. This is the reason why it is necessary to carefully choose the step-size sequence: too large and the variance will force divergence; too small and the algorithm will converge, but not to the solution intended. Indeed, iterate averaging is a technique that aims to allow for larger step-sizes, while producing the same overall rate of convergence (and we show that it succeeds in eliminating the necessity to know properties of the stationary distribution of the underlying Markov chain). A more direct approach is to center the updates, and this was pioneered recently for solving batch problems via stochastic gradient descent in convex optimization \citep{johnson2013accelerating}. We propose a variant of TD(0) that uses this approach, though our setting is considerably more complicated as samples arrive online and the function being optimized is 
not accessible directly. 
%We give a finite-time analysis for the centered TD algorithm, and show that the algorithm results in an exponential convergence rate.
% is not necessary for this algorithm.

\paragraph{Our contributions} can be summarized as follows:\\
\begin{inparaenum}[\bfseries(1)]  
\item \textbf{\textit{Concentration bounds.}}
Under assumptions similar to \citep{tsitsiklis1997analysis}, we provide non-asymptotic bounds, both in high probability as well as in expectation and these quantify the convergence rate of TD(0) with function approximation. \\
\item \textbf{\textit{Centered TD.}} We propose a variant of TD(0) that incorporates a centering sequence and we show that it converges faster than the regular TD(0) algorithm in expectation. 
\end{inparaenum}

The key insights from our finite-time analysis are:\\
\begin{inparaenum}[\bfseries(1)]  
\item Choosing $\gamma_n= \frac{c_0c}{(c+n)}$, with $c_0< \mu(1-\beta)/(2(1+\beta)^2)$ and $c$ such that $\mu (1-\beta)c_0c >1$, we obtain the optimal rate of convergence of the order $O\left(1/\sqrt{n}\right)$, both in high-probability as well as in expectation. Here $\mu$ is the smallest eigenvalue of the matrix $\Phi\tr\Psi\Phi$ (see Theorem \ref{thm:td-rate}). However, obtaining this rate is problematic as it implies (partial) knowledge (via $\mu$) of the transition dynamics of the MDP.\\
\item With iterate averaging, one can get rid of the step-size dependency and still obtain the optimal rate of convergence, both in high probability as well as in expectation (see Theorem \ref{thm:td-avg-rate}).\\
\item For the centered variant of TD(0), we obtain an exponential convergence rate when the underlying Markov chain mixes fast (see Theorem \ref{thm:ctd-bound}).\\
\item We illustrate the usefulness of our bounds on two simple synthetic experimental setups. In particular, using the step-sizes suggested by our bounds in Theorems \ref{thm:td-rate}--\ref{thm:ctd-bound}, we are able to establish convergence empirically for TD(0), and both its averaging, as well as centered variants.
%This is a significant leap (from $O\left(1/n\right)$ to $O\left(\eta^n\right)$, where $\eta\in (0,1)$) in the convergence rate for TD(0) type schemes and this algorithm can be easily incorporated into approximate policy iteration schemes.  
\end{inparaenum}

%TD(0) with function approximation is an efficient algorithm that is easy to implement on large state space problems. However, deriving convergence rate results, especially of non-asymptotic nature, requires sophisticated machinery. In particular, we base our approach on that proposed in \citep{frikha2012concentration} (and later expanded to include iterate averaging in \citep{fathi2013transport}). 
\paragraph{Related work.}
Concentration bounds for general stochastic approximation schemes have been derived in \citep{frikha2012concentration} and later expanded to include iterate averaging in \citep{fathi2013transport}. Unlike the aforementioned reference, deriving convergence rate results for TD(0), especially of non-asymptotic nature, requires sophisticated machinery as it involves Markov noise that impacts the mixing rate of the underlying Markov chain.  An asymptotic normality result for TD($\lambda$) is available in \citep{kondathesis}. The authors establish there that TD($\lambda$) converges asymptotically to a multi-variate Gaussian distribution with a covariance matrix that depends on $A$ (see \eqref{eq:td-system-of-equations}). This rate result holds true for TD($\lambda$) when combined with iterate averaging, while the non-averaged case does not result in the optimal rate of convergence. Our results are consistent with this observation, as we establish from a finite time analysis that the non-averaged TD(0) can result 
in optimal convergence only if the step-size constant $c$ in $\gamma_n=c/(c+n)$ is set carefully (as a function of a certain quantity that depends on the stationary distribution - see (A3) below), while one can get rid of this dependency and still obtain the optimal rate with iterate averaging.
Least squares temporal difference methods are popular alternatives to the classic TD($\lambda$). Asymptotic convergence rate results for LSTD($\lambda$) and LSPE($\lambda$), two popular least squares methods, are available in \citep{kondathesis} and \citep{yu2009convergence}, respectively. However, to the best of our knowledge, there are no concentration bounds that quantify the rate of convergence through a finite time analysis. A related work in this direction is the finite time bounds for LSTD in \citep{lazaric2010finite}. However, the analysis there is under a fast mixing rate assumption, while we provide non-asymptotic rate results without making any such assumption. We note here that assuming a mixing rate implies partial knowledge of the transition dynamics of the MDP under a stationary policy and in typical RL settings, this information is not available.
%%%%%%%%%%%%%%%%%%%%%%%%%%%%%%%%%%%%%%%%%%%%%%%%%%%%%%%%%%%%%%
%%%%%%%%%%%%%%%%%%%%%%%%%%%%%%%%%%%%%%%%%%%%%%%%%%%%%%%%%%%%%%
\section{TD(0) with Linear Approximation}
\label{sec:background}
We consider an MDP with state space $\S$ and action space $\A$. 
The aim is to estimate the value function $V^\pi$ for any given stationary policy $\pi:\S\rightarrow \A$, where
\begin{equation}
V^\pi(s) := \E\left[\sum_{t=0}^{\infty} \beta^t r(s_t,\pi(s_t))\mid s_0=s\right].
\label{eq:vf}
\end{equation}
Recall that $\beta \in (0,1)$ is the discount factor, $s_t$ denotes the state of the MDP at time $t$, and $r(s,a)$ denotes the reward obtained in state $s$ under action $a$. 
The expectation in \eqref{eq:vf} is taken with respect to the transition dynamics $P$. 
It is well-known that $V^\pi$ is the solution to the fixed point relation $V = \T^\pi(V)$, where the Bellman operator $\T^\pi$ is defined as
\begin{align}
\T^\pi(V)(s) := r(s,\pi(s)) +  \beta \sum\limits_{s'} p(s,\pi(s),s') V(s'),
\label{vf}
\end{align}
TD(0) \citep{sutton1998reinforcement} performs a fixed point-iteration using stochastic approximation:
Starting with an arbitrary $V_0$, update
\begin{align}
V_n(s_n) := V_{n-1}(s_n)& + \gamma_n\big(r(s_n,\pi(s_n)) +  \beta V_{n-1}(s_{n+1}) - V_{n-1}(s_n)\big),
\label{eq:td-full-state}
\end{align}
where $\gamma_n$ are step-sizes that satisfy standard stochastic approximation conditions. 
% The iterate $V_n$ can be shown to converge to the true value function $V^\pi$ (see \citep{bertsekas1996neuro} for details). 

As discussed in the introduction, while TD(0) algorithm is simple and provably convergent to the fixed point of $\T^\pi$ for any policy, it suffers from the curse of dimensionality associated with high-dimensional state spaces, and popular method to allieviate this is to parameterize the value function using a linear function approximator, i.e. for every $s \in \S$, approximate $V^\pi(s) \approx  \phi(s)\tr \theta$.
Here $\phi(s)$ is a $d$-dimensional feature vector with $d << |\S|$, and $\theta$ is a tunable parameter.
Incorporating function approximation, an update rule for TD(0) analogous to \eqref{eq:td-full-state} is given in \eqref{eq:td-update-intro}. 
%%%%%%%%%%%%%%%%%%%%%%%%%%%%%%%%%%%%%%%%%%%%%%%%%%%%%%%%%%%%%%
%%%%%%%%%%%%%%%%%%%%%%%%%%%%%%%%%%%%%%%%%%%%%%%%%%%%%%%%%%%%%%

%%%%%%%%%%%%%%%%%%%%%%%%%%%%%%%%%%%%%%%%%%%%%%%%%%%%%%%%%%%%%%
%%%%%%%%%%%%%%%%%%%%%%%%%%%%%%%%%%%%%%%%%%%%%%%%%%%%%%%%%%%%%%
%%%%%%%%%%%%%%%%%%%%%%%%%%%%%%%%%%%%%%%%%%%%%%%%%%%%%%%%%%%%%%
%%%%%%%%%%%%%%%%%%%%%%%%%%%%%%%%%%%%%%%%%%%%%%%%%%%%%%%%%%%%%%
%%%%%%%%%%%%%%%%%%%%%%%%%%%%%%%%%%%%%%%%%%%%%%%%%%%%%%%%%%%%%%
\section{Concentration bounds for TD(0)}
\label{sec:results}
\subsection{Assumptions}
\begin{enumerate}[\bfseries({A}1)]
\item \textit{\textbf{Ergodicity}}: The Markov chain induced by the policy $\pi$ is irreducible and aperiodic. Moreover, there exists a stationary distribution $\Psi(=\Psi_\pi)$ for this Markov chain. Let $\E_\Psi$ denote the expectation w.r.t. this distribution.\\
\item \textit{\textbf{Bounded rewards}}: $|  r(s,\pi(s)) | \le 1,$ for all $s\in\S$.\\
\item \textit{\textbf{Linear independence}}: The feature matrix $\Phi$ has full column rank. This assumption implies that the matrix $\Phi\tr\Psi\Phi$ has smallest eigenvalue $\mu>0$.\\
\item \textit{\textbf{Bounded features}}: $\l \phi(s) \r \le 1,$ for all $s\in\S$.\\
\item The step sizes satisfy $\sum_{n}\gamma_n = \infty$, and $\sum_n\gamma_n^2 <\infty$.\\
\item \textit{\textbf{Combined step size and mixing assumption}}: There exists a non-negative function $B'(\cdot)$ such that: For all $s \in \S$ and $m\ge0$,
\begin{align*}  
&\sum\limits_{\tau=0}^\infty e^{ 3(1+\beta)\sum_{j = 1}^{\tau-1}\gamma_\tau}
   														\left\| \E( r(s_{\tau},\pi(s_{\tau}) )\phi(s_\tau)\mid s_0 = s) - \E_\Psi(r( s_{\tau},\pi(s_{\tau}) )\phi(s_\tau))\right\| \le B'(s), \\
   &\sum\limits_{\tau=0}^\infty e^{ 3(1+\beta)\sum_{j = 1}^{\tau-1}\gamma_\tau}
   													\left\| \E(\phi(s_\tau)\phi(s_{\tau+m})\tr \mid s_0 = s)- \E_\Psi(\phi(s_\tau)\phi(s_{\tau+m})\tr)\right\| \le B'(s),
\end{align*}
\end{enumerate}
\textbf{(A6')} \textit{\textbf{Uniform mixing bound}}: (A6) holds, and there exists a constant $B'$ that is an uniformly bound on $B(s), \forall s \in \S$.

%The above assumptions are similar in nature to those made in \citep{tsitsiklis1997analysis} for establishing asymptotic convergence of TD(0) with linear function approximators. 
In comparison to the assumptions in \citep{tsitsiklis1997analysis}, (A1), (A3), (A5) have exact counterparts in \citep{tsitsiklis1997analysis}, while (A2), (A4) and (A6) are simplified versions of the corresponding boundedness assumptions in \citep{tsitsiklis1997analysis}.

\begin{remark}\textbf{\textit{(Geometric ergodicity)}}
A Markov chain is mixing at a \textit{geometric rate} if 
\begin{align}
P(s_t=s\mid s_0) - \psi(s)| \le C \rho^{t}.
\label{eq:mixing-fast}
\end{align}
For finite state space settings, the above condition holds and hence (A6) is easily satisfied. Moreover, $B' =  \Theta\left(1/(1-(1-\rho)^{1-\epsilon}\right)$, for any $\epsilon > 0$. Here $\rho$ is an unknown quantity that relates to the second eigenvalue of the transition probability matrix. See Chapters 15 and 16 of \citep{meyn2009markov} for a detailed treatment of the subject matter.
\end{remark}

%\footnote{We assume an upper bound of $1$ for both rewards and features for the sake of simplicity and we believe our analysis can be extended (left for future work) to the following generalized variants:\\
%(\textbf{A2}') The rewards satisfy $\E_\Psi(r^2(s,\pi(s))) < \infty$, $\forall s \in \S$;\\
%(\textbf{A4}') The feature vector $\phi_k(s)$ for any $k=1,\ldots,d$ and $s\in\S$ satisfies $\E_\Psi(\phi_k^2(s)) < \infty$.}.

\subsection{Non-averaged case}
\label{sec:conc-bounds-td}

\begin{theorem}
\label{thm:td-rate}
Under \textbf{(A1)-(A6)}, choosing $\gamma_n= \frac{c_0c}{(c+n)}$, with $c_0< \mu(1-\beta)/(2(1+\beta)^2)$ and $c$ such that $\mu (1-\beta)c_0c >1$, we have, 
\begin{align*}
 \E \l \theta_n - \theta^* \r \le \dfrac{K_1(n)}{\sqrt{n+c}}.
\end{align*}
In addition, assuming \textbf{(A6')}, we have. for any $\delta >0$,
\begin{align*}
&\P\left( \l \theta_n - \theta^* \r \le \dfrac{K_2(n)}{\sqrt{n+c}}\right) \ge 1 - \delta, \text{ where}\\
&K_1(n):= 
	\left( \frac{c(\l\theta_0 - \theta^*\r+C)}{(n+c)^{\mu(1-\beta)c_0c-1}} 
					+  \frac{(1+\l\theta^*\r)c_0^2c^2 + C c_0 c}{\mu(1-\beta)c_0c-1}
					\right)^\frac{1}{2},\\
&K_2(n):= \frac{c_0cB'
					\left(2\left[2 + c_0c\right]
					\left[1 + \beta(3-\beta)\right]
					\ln(1/\delta)\right)^\frac{1}{2}}
					{ \left( \mu (1-\beta)c_0 c -1 \right)^\frac{1}{2} }
					+ K_1(n),\\
&C := 6 d B(s_0) \left(\frac{\l \theta_{0}\r + d + \l \theta^*\r}{1-\beta}\right)^2.
\end{align*}
\end{theorem}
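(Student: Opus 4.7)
The plan is to analyze the linear stochastic recursion for $z_n := \theta_n - \theta^*$. Setting $A_n := \phi(s_n)(\phi(s_n) - \beta\phi(s_{n+1}))\tr$ and $w_n := (r(s_n,\pi(s_n)) - (\phi(s_n)-\beta\phi(s_{n+1}))\tr\theta^*)\phi(s_n)$, the TD(0) update \eqref{eq:td-update-intro} is equivalent to
\begin{align*}
z_{n+1} = (I - \gamma_n A_n)\, z_n + \gamma_n w_n.
\end{align*}
Since $A\theta^* = b$ by \eqref{eq:td-system-of-equations}, $\E_\Psi[A_n]=A$ and $\E_\Psi[w_n]=0$. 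From (A2) and (A4), $\l A_n\r \le 1+\beta$ and $\l w_n\r \le 1 + (1+\beta)\l\theta^*\r$. The projection argument in \citep{tsitsiklis1997analysis} combined with (A3) yields the stability inequality $\zeta\tr A \zeta \ge \mu(1-\beta)\l\zeta\r^2$ for every $\zeta\in\R^d$, which is the engine of contraction.

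For the expectation bound, I would square the recursion and take expectations. The resulting quadratic and cross terms involve $A_n$ and $w_n$, whose conditional means given $s_0$ differ from their stationary means ($A$ and $0$, respectively) by quantities controlled by (A6) in terms of $B(s_0)$. After this stationary replacement, the Tsitsiklis-Van Roy inequality gives
\begin{align*}
\E\l z_{n+1}\r^2 \le \big(1 - 2\mu(1-\beta)\gamma_n + (1+\beta)^2\gamma_n^2\big)\E\l z_n\r^2 + C_1\gamma_n^2 + C_2\gamma_n\, e_n,
\end{align*}
where $e_n$ is a summable mixing-error sequence proportional to $B(s_0)$. The step-size condition $c_0 < \mu(1-\beta)/(2(1+\beta)^2)$ is exactly what collapses the bracket into $1-\mu(1-\beta)\gamma_n$. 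I would then apply the standard calculus lemma for recursions $u_{n+1}\le(1-p\gamma_n)u_n + q\gamma_n^2$ with $\gamma_n=c_0c/(c+n)$ and $pc_0c>1$: this yields $u_n \le c(u_0+C')(n+c)^{-pc_0c} + C''/((pc_0c-1)(n+c))$. Setting $p=\mu(1-\beta)$ and matching constants produces $\E\l z_n\r^2 \le K_1(n)^2/(n+c)$, and Jensen's inequality $\E\l z_n\r \le (\E\l z_n\r^2)^{1/2}$ completes the first part.

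For the high-probability bound I would unroll the recursion into
\begin{align*}
z_n = \Bigg(\prod_{k=0}^{n-1}(I-\gamma_k A_k)\Bigg) z_0 + \sum_{k=0}^{n-1} \gamma_k \Bigg(\prod_{j=k+1}^{n-1}(I-\gamma_j A_j)\Bigg) w_k,
\end{align*}
and split each $w_k$ into a martingale difference (relative to the natural filtration of the Markov chain) and a Markov-mixing residual. Under the uniform version (A6'), the residual contributions are deterministically bounded by a telescoping sum absorbed into $B'$, while the martingale sum is controlled by a Hilbert-space Azuma-Hoeffding bound, producing a deviation of order $\sqrt{\ln(1/\delta)\sum_k\gamma_k^2}$. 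Combining this probabilistic control with the deterministic mixing piece and the expectation bound above yields the stated $K_2(n)/\sqrt{n+c}$.

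The main obstacle is the Markov-noise bookkeeping. The exponential integrating factor $e^{3(1+\beta)\sum_j\gamma_j}$ built into (A6) is engineered to dominate the squared operator-norm products $\prod\l I - \gamma_j A_j\r^2 \le e^{2(1+\beta)\sum_j\gamma_j}$ that arise when unrolling the $\l z_n\r^2$ recursion, with the extra slack accommodating cross terms and the martingale-increment bounds invoked under (A6'). Aligning this factor so that the mixing residuals telescope into a constant multiple of $B(s_0)$, which then becomes the additive constant $C$ in the stated bound, is the delicate step; once that bookkeeping is in place, the remainder is standard stochastic-approximation analysis.
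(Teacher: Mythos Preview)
Your expectation-bound outline matches the paper's Proposition~\ref{thm:expectation-bound-td}: decompose $f_{X_n}(\theta_n)$ into its stationary mean $-Az_n$, a martingale increment $\Delta M_n$, and a mixing residual $\epsilon_n$, use $\zeta\tr A\zeta\ge\mu(1-\beta)\l\zeta\r^2$ for the one-step contraction, and unroll with the step-size calculus you describe. The paper's bookkeeping is heavier than your sketch (it also tracks $\epsilon_n'=f_{X_n}(\theta^*)-\E_\Psi f_{X_n}(\theta^*)$ separately and does a discrete integration by parts to feed the mixing sums into (A6)), but the skeleton is the same.

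The high-probability part, however, has a real gap. In your unrolled expression
\[
z_n=\Bigg(\prod_{k}(I-\gamma_kA_k)\Bigg)z_0+\sum_{k}\gamma_k\Bigg(\prod_{j>k}(I-\gamma_jA_j)\Bigg)w_k,
\]
the operator $Q_{k,n}:=\prod_{j>k}(I-\gamma_jA_j)$ depends on $s_{k+1},\dots,s_n$ and is therefore not $\F_k$-measurable. Even after you split $w_k$ into a martingale difference plus a mixing residual, the summand $\gamma_k Q_{k,n}\big(w_k-\E[w_k\mid\F_{k-1}]\big)$ is \emph{not} a martingale difference, so Azuma--Hoeffding does not apply to this sum as written. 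Replacing $Q_{k,n}$ by a deterministic bound does not rescue the argument: since $I-\gamma_jA_j$ is only a contraction in the $\Psi$-mean, the pathwise bound $\l Q_{k,n}\r\le\prod_j(1+\gamma_j(1+\beta))$ grows like $((c+n)/(c+k))^{(1+\beta)c_0c}$ and blows up the $\sum_k\gamma_k^2$ estimate you rely on.

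The paper sidesteps this by running a bounded-differences argument on the \emph{scalar} $\l z_n\r$ rather than on $z_n$ itself. It writes $\l z_n\r-\E\l z_n\r=\sum_i D_i$ with $D_i=g_i-\E[g_i\mid\F_{i-1}]$ and $g_i=\E\big[\l z_n\r\mid\theta_i,\F_{i-1}\big]$, and proves (Lemma~\ref{lemma:gi-lipschitz-random-lstd}) that, conditioned on $\F_{i-1}$, $g_i$ is $L_i$-Lipschitz in the innovation $f_{X_i}(\theta_{i-1})$. The Lipschitz constant is obtained by bounding $\E\big[\l\Theta_n^i(\theta)-\Theta_n^i(\theta')\r^2\mid\F_i\big]$; here the expectation over the future $A_j$'s is taken \emph{before} the norm, so one can legitimately invoke the $\Psi$-contraction, while the leftover non-stationary pieces are exactly what the exponential weight in (A6') is designed to absorb. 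Your last paragraph correctly identifies that this weight must dominate the squared operator products, but you are trying to spend it on the wrong decomposition: it is consumed in the Lipschitz-constant computation of the Doob martingale, not in an unrolled-sum Azuma bound.
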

\begin{proof}
 See Section \ref{sec:non-av-proof}.
\end{proof}
%%%%%%%%%%%%%%%%%%%%%%%%%%%%%%%%%%%%%%%%%%%%%%%%%%%%%%%%%%%%%%

\begin{remark}
$K_1(n)$ and $K_2(n)$ above are $O(1)$, i.e., they can be upper bounded by a constant. Thus, one can indeed get the optimal rate of convergence of the order $O\left(1/\sqrt{n}\right)$ with a step-size $\gamma_n= \frac{c}{(c+n)}$. However, this rate is contingent upon on the constant $c$ in the step-size being chosen correctly. This is problematic because the right choice of $c$ requires the knowledge of eigenvalue $\mu$ for expectation bound and and knowing $\mu$ would imply knowledge about the transition probability matrix of the underlying Markov chain. The latter information is unavailable in a typical RL setting. The next section derives bounds for the iterate averaged variant that overcomes this problematic step-size dependency.
%For finite state space settings, the mixing bound $B(s_0)$ can be shown to be a constant (see Section VII of \citep{tsitsiklis1997analysis}), but one that depends on the second eigenvalue of the transition probability matrix and the latter information is unavailable in a typical RL setting. 
\end{remark}
% and this problematic as knowing $B(s_0)$ would imply knowledge about the transition probability matrix, in particular the second eigenvalue, of the underlying Markov chain. Even in the case of a finite state space, it is easy to infer that $B(s_0)$ is a constant , but one still has to know this constant value explicitly to set $c$. 

%%%%%%%%%%%%%%%%%%%%%%%%%%%%%%%%%%%%%%%%%%%%%%%%%%%%%%%%%%%%%%
%%%%%%%%%%%%%%%%%%%%%%%%%%%%%%%%%%%%%%%%%%%%%%%%%%%%%%%%%%%%%%
%%%%%%%%%%%%%%%%%%%%%%%%%%%%%%%%%%%%%%%%%%%%%%%%%%%%%%%%%%%%%%
%%%%%%%%%%%%%%%%%%%%%%%%%%%%%%%%%%%%%%%%%%%%%%%%%%%%%%%%%%%%%%
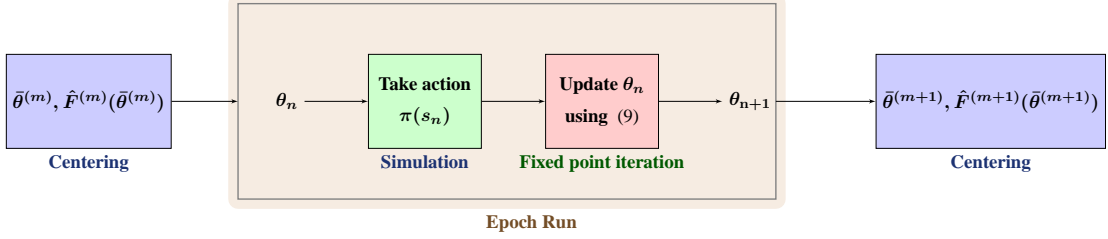
\begin{figure*}[t]
\centering
\tikzstyle{block} = [draw, fill=white, rectangle,
   minimum height=5em, minimum width=6em]
\tikzstyle{sum} = [draw, fill=white, circle, node distance=1cm]
\tikzstyle{input} = [coordinate]
\tikzstyle{output} = [coordinate]
\tikzstyle{pinstyle} = [pin edge={to-,thin,black}]
\scalebox{0.65}{\begin{tikzpicture}[auto, node distance=2cm,>=latex']
\node (theta) at (0,0) {$\boldsymbol{\theta_n}$};
\node [block, fill=green!20, right=1.3cm of theta,label=below:{\color{bleu2}\bf Simulation},align=center] (sample) {\textbf{Take action }\\[1ex] $\boldsymbol{\pi(s_n)}$}; 
\node [block, fill=red!20, right=1.3cm of sample,label=below:{\color{darkgreen}\bf Fixed point iteration},align=center] (update) {\textbf{Update} $\boldsymbol{\theta_{n}}$ \\[1ex]\textbf{using } \eqref{eq:ctd-update} };
\node [right=1.3cm of update] (end) {$\boldsymbol{\mathbf{\theta_{n+1}}}$};
\draw [->] (theta) --  (sample);
% \draw [->] (sample) -- node {$r(s_n,\pi(s_n))$} (update);
\draw [->] (sample) -- (update);
\draw [->] (update) -- (end);
\draw [color=gray,thick](-1,-2) rectangle (10,2);
\node at (5,-2.5) {\textbf{\color{brown!60!black}Epoch Run}};
\node [block, fill=blue!20,label=below:{\color{bleu2}\bf Centering}, left=2cm of theta,align=center] (avgbeg)  
{$\boldsymbol{\bar\theta^{(m)},\hat F^{(m)}(\bar\theta^{(m)})}$}; 
\node [block, fill=blue!20, right=2cm of end,label=below:{\color{bleu2}\bf Centering},align=center] (avgend)  
{$\boldsymbol{\bar\theta^{(m+1)},\hat F^{(m+1)}(\bar\theta^{(m+1)})}$}; 
\draw [->] (avgbeg) -- (-1,0);
\draw [->] (10,0) -- (avgend);

\begin{pgfonlayer}{background}
    \filldraw [line width=4mm,join=round,brown!15]
(-1,-2) rectangle (10,2);
  \end{pgfonlayer}
\end{tikzpicture}}
\caption{Illustration of centering principle in CTD algorithm.}
\label{fig:algorithm-flow-ctd}
\end{figure*}
%%%%%%%%%%%%%%%%%%%%%%%%%%%%%%%%%%%%%%%%%%%%%%%%%%%%%%%%%%%%%%

%%%%%%%%%%%%%%%%%%%%%%%%%%%%%%%%%%%%%%%%%%%%%%%%%%%%%%%%%%%%%%
\subsection{Iterate Averaging}
\label{sec:iterate-av}
The idea here is to employ larger step-sizes and combine it with averaging of the iterates, i.e., $\bar \theta_{n+1} := (\theta_1+\ldots+\theta_n)/n$. This principle was introduced independently by Ruppert \citep{ruppert1991stochastic} and Polyak \citep{polyak1992acceleration}, for accelerating stochastic approximation schemes. The following theorem establishes that iterate averaging results in the optimal rate of convergence without any step-size dependency:
\begin{theorem}
\label{thm:td-avg-rate}
Under \textbf{(A1)-(A6)}, choosing $\gamma_n= c_0\left(\frac{c}{c+n}\right)^\alpha$, with $\alpha \in (1/2,1)$ and $c \in (0,\infty)$, we have, for all $n>n_0:=(c\mu(1-\beta)/(2c_0(1+\beta)^2))^{-1/\alpha}$,
\begin{align*}
\E \l \bar\theta_n -  \theta^* \r
	\le \dfrac{K_1^{IA}(n)}{(n+c)^{\alpha/2}}.
\end{align*}
In addition, assuming \textbf{(A6)'}, we have, for any $\delta >0$,
\begin{align*}
\P\left( \l \bar\theta_n - \theta^* \r
	\le \dfrac{K_2^{IA}(n)}{(n+c)^{\alpha/2}}\right)
		\ge 1 - \delta, 
\end{align*}
where 
\begin{align*}
K_1^{IA}(n) :=& \dfrac{((1+dc_0c^{\alpha}(c+n_0)^{1-\alpha})
							e^{(1+\beta)c_0c^{\alpha}(c+n_0)^{1-\alpha})}+\|\theta^*\|+C)C''}
								{(n+c)^{(1-\alpha)/2}}\\
&		+\dfrac{n_0 \left[(1+dc_0c^{\alpha}(c+n_0)^{1-\alpha})
							e^{(1+\beta)c_0c^{\alpha}(c+n_0)^{1-\alpha})}
							+ \|\theta^*\|\right]}
							{n^{1 - \alpha/2}}\\
&	+  c^\alpha c_0\left[1+ \l\theta^*\r^\frac{1}{2}
																+\left(\frac{C}{c^\alpha c_0}\right)^\frac{1}{2}\right]
					\left(\frac{\mu(1-\beta)c_0c^\alpha}{1-\alpha}\right)^{-\frac{\alpha+2\alpha^2}{2(1-\alpha)}},\\
K_2^{IA}(n)
	:= & \frac{4\sqrt{(1 +  C') B'}}{\mu(1-\beta)}	\frac{C'''}{n^{(1 - \alpha)/2}}\;+\; K_1^{IA}(n-n_0),\\
C' :=& \left( 3^\alpha + \left[\frac{4\alpha}{\mu(1-\beta)c_0c^{\alpha}}
						+ \frac{2^\alpha}{\alpha}\right]^2\right)^{\frac{1}{2}}, 
 C'' := \sum\limits_{k = 1}^\infty k^{-2\alpha},
\text{ and }\\
C''' :=& \sum\limits_{k=1}^{\infty}
						e^{-\frac{\mu c^\alpha (1-\beta)c_0}{2(1-\alpha)}((n+c)^{1-\alpha}-((c+n_0)^{1-\alpha})}.
\end{align*}						
\end{theorem}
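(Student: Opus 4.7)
The plan is to combine the single-iterate analysis of Theorem~\ref{thm:td-rate} (adapted to the slower step-size schedule $\gamma_n = c_0 (c/(c+n))^\alpha$, $\alpha \in (1/2,1)$) with a Polyak--Ruppert averaging argument, splitting the average into pre-$n_0$ and post-$n_0$ regimes. The threshold $n_0 = (c\mu(1-\beta)/(2c_0(1+\beta)^2))^{-1/\alpha}$ is chosen so that beyond it the quadratic step-size term in the Lyapunov recursion is dominated by the linear contraction term. Writing the recursion as $\theta_{n+1} - \theta^* = (I - \gamma_n A)(\theta_n - \theta^*) + \gamma_n \zeta_n$ with $A$ from~\eqref{eq:td-system-of-equations} and $\zeta_n$ the stochastic residual, (A3) yields $\langle x, Ax\rangle \ge \mu(1-\beta)\|x\|^2$ and (A4) yields $\|A\| \le 1+\beta$, so for $n > n_0$ the Lyapunov function $V_n := \|\theta_n - \theta^*\|^2$ contracts in the sense that $\prod_{k=n_0}^{n-1}(1 - \mu(1-\beta)\gamma_k)$ is bounded by $\exp(-\mu(1-\beta)c_0 c^\alpha((n+c)^{1-\alpha}-(c+n_0)^{1-\alpha})/(1-\alpha))$, which is exactly the exponent in the summand of $C'''$.

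Next I would decompose $\bar\theta_n - \theta^* = \tfrac{1}{n}\sum_{k=1}^{n_0}(\theta_k - \theta^*) + \tfrac{1}{n}\sum_{k=n_0+1}^n (\theta_k - \theta^*)$. The transient sum is controlled by a Gronwall induction on $\|\theta_k\|$ using (A2) and (A4), yielding the factor $(1 + dc_0 c^\alpha(c+n_0)^{1-\alpha})\exp((1+\beta)c_0 c^\alpha(c+n_0)^{1-\alpha})$ that, after also absorbing $\|\theta^*\|$, is exactly the bracketed transient constant in $K_1^{IA}$ divided by $n$. For the post-transient sum I would unroll the linear recursion to
\begin{align*}
\theta_k - \theta^* = \Pi_{n_0,k-1}(\theta_{n_0+1}-\theta^*) + \sum_{j=n_0}^{k-1}\gamma_j \Pi_{j+1,k-1}\zeta_j, \qquad \Pi_{a,b} := \prod_{i=a}^b(I - \gamma_i A),
\end{align*}
and then apply an Abel-summation interchange to rewrite $\tfrac{1}{n}\sum_k(\theta_k - \theta^*)$ as an initial-condition term contracted by $\Pi_{n_0,\cdot}$ (which produces the first summand of $K_1^{IA}$ with denominator $(n+c)^{(1-\alpha)/2}$) plus a weighted noise aggregate $\tfrac{1}{n}\sum_j \gamma_j \zeta_j \sum_{k>j}\Pi_{j+1,k-1}$. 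The inner propagator sum acts as an approximate $A^{-1}$ via the telescoping identity $\gamma_j A\sum_{k\ge j+1}\Pi_{j+1,k-1} = I - \Pi_{j+1,n-1}$, which recovers the familiar Polyak--Ruppert variance structure responsible for the third summand of $K_1^{IA}$.

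To close the argument, I would decompose $\zeta_j = (\zeta_j - \E_\Psi \zeta_j) + \E_\Psi \zeta_j$. The first piece is a martingale difference bounded via (A2), (A4), and the iterate bound; it contributes the $\sqrt{\ln(1/\delta)}$ factor in $K_2^{IA}$ through an Azuma--Hoeffding-type inequality under (A6'), and its $L^2$ variance-of-sum gives the expectation bound. The second piece is a Markov-bias residual absorbed by (A6)/(A6') through the summability constants $B'(s_0)$ and $B'$; the factor $e^{3(1+\beta)\sum_j \gamma_j}$ in (A6) is exactly what is needed to accommodate this step-size schedule. The main obstacle will be reconciling the Lyapunov inner product (in which $A$ is $\mu(1-\beta)$-coercive) with the operator-norm estimates required for the non-symmetric matrix product $\Pi_{a,b}$; I expect to pass through a similarity transform involving the symmetric part of $A$, incurring a condition-number factor that is absorbed into the constants $C'$ and $C''$. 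The remaining book-keeping is aligning the lower and upper limits of the exponential-contraction sum with $n_0$, which is precisely what the definition of $C'''$ records.
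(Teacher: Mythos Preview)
Your proposal follows the \emph{classical Polyak--Ruppert route}: unroll $\theta_k-\theta^*=\Pi_{n_0,k-1}(\theta_{n_0+1}-\theta^*)+\sum_j\gamma_j\Pi_{j+1,k-1}\zeta_j$, Abel-sum over $k$, use the telescoping identity to recover an approximate $A^{-1}$, and then apply Azuma--Hoeffding to the resulting martingale noise sum. The paper does \emph{not} do this. For the expectation bound it simply invokes the triangle inequality,
\[
\E\|\bar\theta_n-\theta^*\|\le\frac{1}{n}\sum_{k\le n_0}\E\|\theta_k-\theta^*\|+\frac{1}{n}\sum_{k>n_0}\E\|\theta_k-\theta^*\|,
\]
bounds the transient piece exactly as you do, and for $k>n_0$ feeds the $\alpha$-step-size schedule into the single-iterate Proposition~\ref{thm:expectation-bound-td}, obtaining $\E\|\theta_k-\theta^*\|\lesssim (k+c)^{-\alpha/2}$ plus an exponentially decaying initial term; averaging these gives the stated $(n+c)^{-\alpha/2}$ directly. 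For the high-probability bound the paper again avoids working with the noise sum: it runs the same bounded-differences argument as in Proposition~\ref{thm:high-probability-bound-td}, writing $\|\bar z_n\|-\E\|\bar z_n\|=\sum_i D_i$ with $D_i=g_i-\E[g_i\mid\F_{i-1}]$, and showing that each $g_i$ is $L_i$-Lipschitz in the innovation $f_{X_i}(\theta_{i-1})$, where now $L_i$ carries an extra $\tfrac{1}{n}\sum_{l\ge i}$ from the averaging (see Proposition~\ref{thm:high-prob-bound-td-avg}).

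Your route is in principle strictly stronger---a correct Abel-summation argument would deliver the rate $n^{-1/2}$ \emph{independent of $\alpha$}, whereas the theorem as stated only claims $(n+c)^{-\alpha/2}$ (cf.\ the remark after the theorem). The price is the obstacle you flag: the random propagator here is $\prod_k(I-\gamma_k a_k)$ with $a_k$ random and non-symmetric, and once you absorb $a_k-A$ into $\zeta_k$ the increments depend on $\theta_k$ and are not a~priori bounded, so Azuma--Hoeffding does not apply without an auxiliary iterate bound. The paper sidesteps both issues by never forming the propagator sum and by concentrating the scalar $\|\bar z_n\|$ via Lipschitz dependence on innovations rather than concentrating the vector noise sum. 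If your goal is only to recover the stated $(n+c)^{-\alpha/2}$ bound, the paper's direct-averaging argument is considerably shorter.
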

\begin{proof}
 See Section \ref{sec:it-av-proof}.
\end{proof}
\begin{remark}
The step-size exponent $\alpha$ can be chosen arbitrarily close to $1$, resulting in a convergence rate of the order $O\left( 1/\sqrt{n}\right)$. However although the constants $K_1^{IA}(n)$ and $K_2^{IA}(n)$ remain $O(1)$, there is a minor tradeoff here since a choice of $\alpha$ close to $1$ would result in their bounding constants blowing up. One cannot choose $c$ too large or too small for the same reasons. 
\end{remark}

%Thus, iterate averaging results in the optimal rate of convergence, while having no dependency for the choice of $c$ and this result is consistent with the asymptotic convergence rate results from \citep{kondathesis}. 
%%%%%%%%%%%%%%%%%%%%%%%%%%%%%%%%%%%%%%%%%%%%%%%%%%%%%%%%%%%%%%
%%%%%%%%%%%%%%%%%%%%%%%%%%%%%%%%%%%%%%%%%%%%%%%%%%%%%%%%%%%%%%
%%%%%%%%%%%%%%%%%%%%%%%%%%%%%%%%%%%%%%%%%%%%%%%%%%%%%%%%%%%%%%
%%%%%%%%%%%%%%%%%%%%%%%%%%%%%%%%%%%%%%%%%%%%%%%%%%%%%%%%%%%%%%
\section{TD(0) with Centering (CTD)}
\label{sec:fast-td}
CTD is a \textit{control variate} solution to reduce the variance of the updates of normal TD(0). This is achieved by adding a zero-mean, \textit{\textbf{centering}} term to the TD(0) update.
%\subsection{The Algorithm}

 Let $X_n = (s_n,s_{n+1})$. % Each epoch is of length $M$. \\
Then, the TD(0) algorithm can be seen to perform the following fixed-point iteration:
 \begin{align}
\theta_n = \theta_{n-1} + \gamma_{n} f_{X_n}(\theta_n).\label{eq:td-again}
\end{align}
where $f_{X_n}(\theta):= (r(s_n,\pi(s_n)) + \beta \theta\tr \phi(s_{n+1}) - \theta\tr \phi(s_n))\phi(s_{n})$.
The limit of \eqref{eq:td-again} is the solution, $\theta^*$, of $F(\theta) = 0$, where  $F(\theta):= \Pi T^\pi(\Phi \theta) - \Phi\theta$. 
The idea behind the CTD algorithm is to reduce the variance of the increments $f_{X_n}(\theta_n)$, in order that larger step sizes can be used. This is achieved by choosing an extra iterate $\bar\theta_n$, centred over the previous $\theta_n$, and using an increment approximating $f_{X_n}(\theta_n) - f_{X_n}(\bar\theta_n) + F(\bar\theta_n)$. The intuitive motivation for this choice is that when the CTD algorithm arrives close to $\theta^*$, the centering term alone ensures the updates become small, while with regular TD(0), one has to rely on a decaying step size to keep the iterates close to $\theta^*$.

%\paragraph{Difficulties in centering.}
The approach is inspired by the SVRG algorithm, proposed in \citep{johnson2013accelerating}, for a optimising a strongly-convex function.
However, the setting for TD(0) with function approximation that we have is considerably more complicated owing to the following reasons:\\
  \begin{inparaenum}[\bfseries(i)]
   \item Unlike \citep{johnson2013accelerating}, we are not optimising a function that is a finite-sum of smooth functions in a batch setting. Instead, we are estimating a value function which is an infinite (discounted) sum, with the individual functions making up the sum being made available in an online fashion (i.e. as new samples are generated from the simulation of the underlying MDP for policy $\pi$).\\
   \item The centering term in SVRG directly uses $F(\cdot)$, which in our case is a limit function that is neither directly accessible nor can be simulated for any given $\theta$.\\
   \item Obtaining the exponential convergence rate is also difficult owing to the fact that TD(0) does not initially see samples from the stationary distribution and there is an underlying mixing term that affects the rate.\\
   \item Finally, there are extra difficulties owing to the fact that we have a fixed point iteration, while the corresponding algorithm in \citep{johnson2013accelerating} is stochastic gradient descent (SGD). 
  \end{inparaenum}

The CTD algorithm that we propose overcomes the difficulties mentioned above and the overall scheme of this epoch-based algorithm is presented in Figure \ref{fig:algorithm-flow-ctd}. At the start of the $m^{th}$ epoch, a random iterate is picked from the previous epoch, i.e. $\bar\theta^{(m)} = \theta_{i_n}$, where $i_n$ is drawn uniformly at random in $\{(m-1)M,\ldots,mM\}$. Thereafter, for the epoch length $M$, CTD performs the following iteration: Set $\theta_{mM} = \bar\theta^{(m)}$ and for $n=mM,\ldots,(m+1)M-1$ update
 \begin{align}
\theta_{n+1} =& \Upsilon\bigg(\theta_{n} + \gamma \Big(f_{X_{i_n}}(\theta_n) - f_{X_{i_n}}(\bar\theta^{(m)}) 
 + \hat F^{(m)}(\bar\theta^{(m)}) \Big)\bigg),\label{eq:ctd-update}
\end{align}
where $\hat F^{(m)}(\theta) := M^{-1}\sum_{i=(m-1)M}^{mM} f_{X_i}(\theta)$ and $\Upsilon$ is a projection operator that ensures that the iterates stay within a $H$-ball. 
% \begin{remark}
 Unlike TD(0), one can choose a large (constant) stepsize $\gamma$ in \eqref{eq:ctd-update}. This choice in conjunction with iterate averaging via the choice of $\bar\theta^{(m)}$ results in an exponential convergence rate for CTD (see Remark \ref{remark:ctd-rate} below). 

% \end{remark}

\subsection{Finite time bound}
Theorem \ref{thm:ctd-bound} below presents a finite time bound in expectation for CTD under the following mixing assumption:\\
\textbf{(A6'')} There exists a non-negative function $B'(\cdot)$ such that: For all $s \in \S$ and $m\ge0$,
  \begin{align*}  
   &\sum\limits_{\tau=0}^\infty	\left\| \E( r(s_{\tau},\pi(s_{\tau}) )\phi(s_\tau)\mid s_0 = s)
- \E_\Psi(r( s_{\tau},\pi(s_{\tau}) )\phi(s_\tau))\right\| \le B'(s), \\
   &\sum\limits_{\tau=0}^\infty	\left\| \E[\phi(s_\tau)\phi(s_{\tau+m})\tr \mid s_0 = s]  																- \E_\Psi[\phi(s_\tau)\phi(s_{\tau+m})\tr]\right\| \le B'(s),
  \end{align*}
The above is weaker than assumption (A6) used earlier for regular TD(0), and this is facilitated by the fact that we project the CTD iterates onto a $H$-ball.
 \begin{theorem}
\label{thm:ctd-bound}
Assume  (A1)-(A4) and (A6'') and let $\theta^*$ denote the solution of $F(\theta)=0$.
Let the epoch length $M$ of the CTD algorithm \eqref{eq:ctd-update} be chosen such that $C_1<1$, where
\begin{align*}
C_1:=((2\mu\gamma M)^{-1} + \gamma d^2/2)/((1-\beta) - d^2\gamma/2))
\end{align*}
%Then, under (A1)-(A4) and (A6),  we have,\\
\textbf{(i) Geometrically ergodic chains:}
Here the Markov chain underlying policy $\pi$ mixes fast (see \eqref{eq:mixing-fast}) and we obtain\footnotemark
\begin{align}
 &\|\Phi (\bar\theta^{(m)} - \theta^*)\|_{\Psi}^2 
 \le C_1^m  \left( \|\Phi (\bar\theta^{(0)} - \theta^*)\|_{\Psi}^2\right)  + CM C_2H(5\gamma + 4)  \max\{C_1,\rho^M\}^{(m - 1)},\label{eq:fastmixing-ctd-rate}
\end{align}
where $C_2= \gamma/(M ((1-\beta) - d^2\gamma/2))$.
\footnotetext{For any $v\in\R^d$, we take $\| v \| _\Psi := \sqrt{v\tr\Psi v}$.}

\textbf{(ii) General Markov chains:}
\begin{align}
 &\|\Phi (\bar\theta^{(m)} - \theta^*)\|_{\Psi}^2 
 \le C_1^m\left( \|\Phi (\bar\theta^{(0)} - \theta^*)\|_{\Psi}^2\right) + C_2H(5\gamma + 4) \sum_{k = 1}^{m-1} C_1^{(m - 2) -k} B_{(k-1)M}^{kM}(s_0),\label{eq:ctd-rate}
\end{align}
where $B_{(k-1)M}^{kM}$ is an upper bound on the partial sums $\sum_{i=(k-1)M}^{kM}(\E(\phi(s_i)\mid s_0) - \E_{\Psi}(\phi(s_i)))$ and $\sum_{i=(k-1)M}^{kM}(\E(\phi(s_i)\phi(s_{i+l})\mid s_0) - \E_{\Psi}(\phi(s_i)\phi(s_{i+l})\tr))$, for $l = 0,1$.\\[1ex]
\end{theorem}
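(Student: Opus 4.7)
The plan is to adapt the SVRG-style variance-reduction analysis from stochastic optimisation to the TD(0) fixed-point setting. Three modifications are needed: work in the $\Psi$-weighted (semi-)norm in which $\Pi T^\pi$ is a $\beta$-contraction, absorb Markov-chain bias using (A6''), and chain the per-step bound first across an epoch and then across epochs. Start from one step within epoch $m$: write $\theta_{n+1} = \Upsilon(\theta_n + \gamma h_n)$ with $h_n := f_{X_n}(\theta_n) - f_{X_n}(\bar\theta^{(m)}) + \hat F^{(m)}(\bar\theta^{(m)})$, and use non-expansiveness of $\Upsilon$ about $\theta^*$ to obtain
\begin{equation*}
\|\theta_{n+1} - \theta^*\|^2 \le \|\theta_n - \theta^*\|^2 + 2\gamma \langle \theta_n - \theta^*,\, h_n\rangle + \gamma^2 \|h_n\|^2.
\end{equation*}

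For the cross term I would take conditional expectations and use the identity $\E_\Psi[f_X(\theta)] = -A(\theta - \theta^*)$ together with $v^T A v \ge (1-\beta)\|\Phi v\|_\Psi^2$, where $A = \Phi^T\Psi(I-\beta P)\Phi$; the inequality follows from $\Pi T^\pi$ being a $\beta$-contraction in $\|\cdot\|_\Psi$ combined with Jensen applied to the stationary distribution. The centering is precisely what makes the stationary expectation of $h_n$ equal to $-A(\theta_n - \theta^*)$, contributing the contraction gain $-2\gamma(1-\beta)\|\Phi(\theta_n - \theta^*)\|_\Psi^2$, while the deviation from stationarity is absorbed by (A6''). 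For the variance term I would split $h_n$ into the increment $f_{X_n}(\theta_n) - f_{X_n}(\bar\theta^{(m)})$ and the centering piece, apply the Lipschitz bound on $f_X$ (constant at most $1+\beta$) on the $H$-ball, and convert Euclidean norms to $\Psi$-norms using $\mu\|\theta - \theta^*\|^2 \le \|\Phi(\theta-\theta^*)\|_\Psi^2$ from (A3). This bounds $\E\|h_n\|^2$ by a multiple of $\|\Phi(\theta_n - \theta^*)\|_\Psi^2/\mu + \|\Phi(\bar\theta^{(m)} - \theta^*)\|_\Psi^2/\mu$ plus the Markov bias of $\hat F^{(m)}$.

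Summing the one-step inequality over $n = mM, \ldots, (m+1)M-1$ telescopes the Euclidean-norm increments, and the identity $M\,\E\|\Phi(\bar\theta^{(m+1)} - \theta^*)\|_\Psi^2 = \sum_n \E\|\Phi(\theta_n - \theta^*)\|_\Psi^2$ (by uniform selection of $\bar\theta^{(m+1)}$ over the window) rearranges to
\begin{equation*}
\E\|\Phi(\bar\theta^{(m+1)} - \theta^*)\|_\Psi^2 \le C_1 \,\E\|\Phi(\bar\theta^{(m)} - \theta^*)\|_\Psi^2 + (\text{epoch mixing error}),
\end{equation*}
where the numerator $(2\mu\gamma M)^{-1} + \gamma d^2/2$ of $C_1$ comes from dividing the telescoped initial residual and the $\gamma^2\|h_n\|^2$ variance budget by the contraction gain, while the denominator $(1-\beta) - \gamma d^2/2$ is what remains of the contraction after paying the variance cost. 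Unrolling this $m$ times gives the $C_1^m$ base factor; for part (i), geometric ergodicity reduces each epoch's mixing error to order $\rho^M$, and the geometric sum is then dominated by $\max\{C_1,\rho^M\}^{m-1}$, whereas for part (ii) one can only retain the per-epoch partial-sum bounds $B^{kM}_{(k-1)M}(s_0)$ from (A6''), which is exactly how they appear in the statement.

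The main obstacle is controlling $\hat F^{(m)}(\bar\theta^{(m)}) - F(\bar\theta^{(m)})$ cleanly, since the samples defining $\hat F^{(m)}$ are both non-stationary and statistically coupled with $\bar\theta^{(m)}$ (which is itself a random iterate produced by those same samples). Assumption (A6'') is tailored to bound the partial-sum discrepancies that result after conditioning on $\bar\theta^{(m)}$ and separating it from the empirical average, and this is what pins down the particular form of the error terms in both parts of the theorem. A secondary subtlety is the norm conversion: the contraction is natural in $\|\cdot\|_\Psi$ while the quadratic expansion is cleanest in Euclidean norm, and tracking the $1/\mu$ factor from (A3) through the variance bound is what produces the stated $d^2$ dependence in $C_1$.
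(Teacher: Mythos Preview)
Your overall architecture is the paper's: quadratic expansion of one CTD step via non-expansiveness of $\Upsilon$, contraction from the cross term using $v^{T}Av\ge(1-\beta)\|\Phi v\|_{\Psi}^{2}$, a variance bound on $\|h_{n}\|^{2}$, telescoping over the epoch with the uniform-selection identity for $\bar\theta^{(m+1)}$, and then chaining across epochs. Two points, however, do not line up with the paper and, as written, would not recover the stated $C_{1}$.

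\textbf{Variance term.} You propose to bound $\|f_{X}(\theta_{n})-f_{X}(\bar\theta^{(m)})\|$ by the $(1+\beta)$-Lipschitz constant and then convert Euclidean to $\Psi$-norm via $\mu\|\cdot\|^{2}\le\|\Phi\cdot\|_{\Psi}^{2}$. That route yields a factor $(1+\beta)^{2}/\mu$, not $d^{2}$; your closing sentence that ``tracking the $1/\mu$ factor \ldots\ is what produces the stated $d^{2}$ dependence'' conflates two unrelated quantities. The paper instead splits through $\theta^{*}$,
\[
h_{n}=\bigl[f_{X_{i_{n}}}(\theta_{n})-f_{X_{i_{n}}}(\theta^{*})\bigr]
-\Bigl[\bigl(f_{X_{i_{n}}}(\bar\theta^{(m)})-f_{X_{i_{n}}}(\theta^{*})\bigr)-\E(\cdot\mid\F_{n})\Bigr]
+e_{n}(\theta^{*}),
\]
applies $\E\|X-\E X\|^{2}\le\E\|X\|^{2}$ to the middle bracket, and bounds
$\E_{\Psi}\|f_{X}(\theta)-f_{X}(\theta^{*})\|^{2}\le d^{2}\|\Phi(\theta-\theta^{*})\|_{\Psi}^{2}$
directly via the matrix estimate $\|\Phi^{T}\Psi\Phi\|\le d^{2}$. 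That is the origin of the $d^{2}$ in $C_{1}$; no $1/\mu$ enters the variance term (the single $1/\mu$ in $C_{1}$ comes later, when the telescoped Euclidean residual $\|\bar\theta^{(m)}-\theta^{*}\|_{2}^{2}$ is converted to $\|\Phi(\bar\theta^{(m)}-\theta^{*})\|_{\Psi}^{2}$).

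\textbf{Cross-term cancellation.} The update uses $X_{i_{n}}$ with $i_{n}$ drawn uniformly from the \emph{previous} epoch, not the current sample $X_{n}$ as you write. The cancellation mechanism is therefore that $\E[f_{X_{i_{n}}}(\bar\theta^{(m)})\mid\F_{n}]=\hat F^{(m)}(\bar\theta^{(m)})$ exactly, so $\E[h_{n}\mid\F_{n}]=\E[f_{X_{i_{n}}}(\theta_{n})\mid\F_{n}]$; only after this does one separate into $\E_{\Psi}[f_{X}(\theta_{n})]+e_{n}(\theta_{n})$ with $e_{n}$ controlled by (A6''). Your sentence ``the centering is precisely what makes the stationary expectation of $h_{n}$ equal to $-A(\theta_{n}-\theta^{*})$'' skips the first step and is not literally true, because $\hat F^{(m)}(\bar\theta^{(m)})$ is not the stationary mean of $f_{X}(\bar\theta^{(m)})$ (indeed you later flag the coupling between $\hat F^{(m)}$ and $\bar\theta^{(m)}$ as the main obstacle). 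Fixing these two points---the split through $\theta^{*}$ for the variance, and the conditional-expectation cancellation for the drift---your sketch becomes the paper's proof.
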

\begin{proof}
 See Section \ref{sec:ctd-proof}.
\end{proof}
For finite state space settings, we obtain exponential convergence rate \eqref{eq:fastmixing-ctd-rate} since they are geometrically ergodic, while for MDPs that do not mix exponentially fast, the second (mixing) term in \eqref{eq:ctd-rate} will dominate and decide the rate of the CTD algorithm.

\begin{remark}\label{remark:ctd-rate}
 Combining the result in \eqref{eq:fastmixing-ctd-rate} with the bound in statement ($4$) of Theorem $1$ in \citep{tsitsiklis1997analysis}, we obtain
\begin{align*}
 &\|\Phi \bar\theta^{(m)} - V^\pi\|_{\Psi} 
 \le \dfrac{1}{1-\beta} \left\| \Pi V^\pi - V^\pi \right\|_\Psi +  C_1^{m/2}\left( \|\Phi (\bar\theta^{(0)} - \theta^*)\|_{\Psi}\right) 
 + \sqrt{C C_2} \max\{C_1,\rho\}^{(m - 1)/2}. 
\end{align*}
The first term on the RHS above is an artifact of function approximation, while the second and third terms reflect the convergence rate of the CTD algorithm. 
\end{remark}

\begin{remark}
As a consequence of  the fact that $(\bar\theta^{(m)} - \theta^*)^T I (\bar\theta^{(m)} - \theta^*) \le \dfrac{1}{\mu}(\bar\theta^{(m)} - \theta^*)^T \Phi^T \Psi \Phi (\bar\theta^{(m)} - \theta^*)$, one can obtain the following bound on the parameter error for CTD:
\begin{align*}
 &\|\bar\theta^{(m)} - \theta^*\|_2 
 \le (1/\mu)\bigg( C_1^m\left( \|\Phi (\bar\theta^{(0)} - \theta^*)\|_{\Psi}^2\right) + C_2H(5\gamma + 4) \sum_{k = 1}^{m-1} C_1^{(m - 2) -k} B_{(k-1)M}^{kM}(s_0)\bigg).
\end{align*}
Comparing the above bound with those in Theorems \ref{thm:td-rate}--\ref{thm:td-avg-rate}, we can infer that CTD exhibits an exponential convergence rate of order $O(C_1^m)$, while TD(0) with/without averaging can converge only at a sublinear rate of order $O(n^{-1/2})$.
\end{remark}
%\begin{remark}
% To get the optimal rate for the CTD algorithm, we need to know the value of $\mu$ to set the step-size $\gamma$. However, we can get rid of this dependency by explicitly regularizing the problem. This would imply that we solve $(A+\mu I)\theta^*=b$ instead of \eqref{eq:td-system-of-equations}, where we choose $\mu$.  
%\end{remark}
%
%%%%%%%%%%%%%%%%%%%%%%%%%%%%%%%%%%%%%%%%%%%%%%%%%%%%%%%%%%%%%%
%%%%%%%%%%%%%%%%%%%%%%%%%%%%%%%%%%%%%%%%%%%%%%%%%%%%%%%%%%%%%%
%%%%%%%%%%%%%%%%%%%%%%%%%%%%%%%%%%%%%%%%%%%%%%%%%%%%%%%%%%%%%%
\section{Convergence proofs}
\label{sec:analysis}
%In this section we give the major details of proofs of the results, however full proofs of all results can be found in the appendices.
\subsection{Non-averaged case: Proof of Theorem \ref{thm:td-rate}}
\label{sec:non-av-proof}
We split the analysis in two, first considering the bound in high probability, and second the bound in expectation. Both bounds involve a martingale decomposition, the former of the centered error, and the latter of the iteration \eqref{eq:td-update-intro}. 
% Also, both analyses begin with a result for general choices of the step-size sequence, and then specialise these results to obtain the rates in Theorem \ref{thm:td-rate}.
\subsubsection{High probability bound}
We first state and prove a result bounding the error with high probability for general step-sizes:
\begin{proposition}\textbf{\textit{(High probability bound)}}
\label{thm:high-probability-bound-td}
Under (A1)-(A5) and (A6'),  we have,
\begin{align*}
P(  \l \theta_n - \theta^* \r - \E \l \theta_n - \theta^* \r \ge \epsilon ) 
\le e^{- \epsilon^2\left( 2 \sum_{i=1}^{n} L^2_i\right)^{-1}},
\end{align*}
where 
$L_i := \gamma_i  [ e^{ -\mu(1-\beta)\sum_{k = i}^{n} \gamma_k }
						 ( 1 + [ \gamma_i +\sum_{k = i}^{n-1}[\gamma_k - \gamma_{k+1} ]
														e^{\mu(1-\beta)\sum_{j = i}^{k+1} \gamma_j }]
														[1 + \beta(3-\beta)]B'
														)
 								]^{\frac{1}{2}}$.
\end{proposition}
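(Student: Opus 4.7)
The plan is to apply the Azuma--Hoeffding inequality to the Doob martingale associated with $\l\theta_n - \theta^*\r$. Let $\F_i := \sigma(s_0,\ldots,s_i)$ and set $M_i := \E[\l\theta_n - \theta^*\r \mid \F_i]$; since $s_0$ is deterministic and $\theta_n$ is $\F_n$-measurable, the telescoping sum $\sum_{i=1}^n(M_i - M_{i-1})$ equals $\l\theta_n - \theta^*\r - \E\l\theta_n - \theta^*\r$. Once I produce an almost-sure pointwise bound $|M_i - M_{i-1}| \le L_i$, Azuma--Hoeffding immediately gives the claimed sub-Gaussian tail $\exp(-\epsilon^2/(2\sum_i L_i^2))$.

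The substantive step is the per-increment sensitivity estimate. Conditional on $\F_{i-1}$, I would construct a coupling: let $(\tilde s_j)_{j\ge i}$ be an independent resampling of the chain from its $\F_{i-1}$-conditional law, and let $(\tilde\theta_k)_{k\ge i}$ be the TD(0) iterates driven by $(\tilde s_j)$ with the same deterministic step sizes. Since $(\tilde \theta_n,\tilde s_i)$ has the same $\F_{i-1}$-conditional distribution as $(\theta_n,s_i)$, the usual Doob-coupling identity gives $|M_i - M_{i-1}| \le \E[\l\theta_n - \tilde\theta_n\r \mid \F_i]$, reducing the problem to a deterministic-style bound on $\delta_n := \theta_n - \tilde\theta_n$. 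At the coupling instant, $\l\delta_i\r = O(\gamma_i)$ by (A2), (A4) and boundedness of the iterates. For $k > i$, subtracting the two TD(0) updates and using (A3) to make the stationary drift contractive at rate $\mu(1-\beta)$, I would derive a Lyapunov recursion on $V_k := \l\delta_k\r^2$ of the schematic form
\begin{align*}
\E[V_{k+1}\mid \F_i] \;\le\; \bigl(1 - \mu(1-\beta)\gamma_k + O(\gamma_k^2)\bigr)\,\E[V_k\mid \F_i] \;+\; \gamma_k\bigl(1+\beta(3-\beta)\bigr)\,\xi_k\,\sqrt{\E[V_k\mid\F_i]},
\end{align*}
where $\xi_k$ encodes the $\F_i$-conditional mixing discrepancy between $(s_k,s_{k+1})$ and $(\tilde s_k,\tilde s_{k+1})$, and the coefficient $1+\beta(3-\beta)$ tracks the three terms (reward, next-state value, current-state value) produced by a triangle-inequality split of the TD increment, with the $\beta^2$ term coming from the cross-correlation of $\phi(s_k)$ with $\phi(s_{k+1})$. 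The $O(\gamma_k^2)$ drift can be absorbed into the linear contraction under the step-size smallness imposed by the surrounding theorem. Iterating and taking square roots then produces the outer factor $[e^{-\mu(1-\beta)\sum_{k=i}^n\gamma_k}]^{1/2}$ appearing in $L_i$.

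The main obstacle is controlling the cross-correction term $\sum_k \gamma_k \xi_k$, suitably reweighted by the Lyapunov contraction factors, because the raw sum $\sum_k \gamma_k \xi_k$ is not a priori absolutely summable. This is exactly where (A6') enters. After summation by parts against the non-increasing sequence $(\gamma_k)$, the reweighted cross-correction collapses to $\gamma_i \Xi_i + \sum_{k=i}^{n-1}(\gamma_k - \gamma_{k+1})\Xi_k$, where each $\Xi_k$ is a partial sum of the mixing errors of precisely the form bounded uniformly by $B'$ under (A6'); the growing exponential $e^{\mu(1-\beta)\sum_{j=i}^{k+1}\gamma_j}$ arises by unwinding the Lyapunov contraction in reverse. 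Combining these pieces produces the bracketed expression $[\gamma_i + \sum_{k=i}^{n-1}(\gamma_k - \gamma_{k+1})e^{\mu(1-\beta)\sum_{j=i}^{k+1}\gamma_j}](1 + \beta(3-\beta))B'$ inside $L_i$, and a final application of Azuma--Hoeffding completes the proof.
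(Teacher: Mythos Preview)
Your high-level strategy matches the paper's exactly: decompose $\|\theta_n-\theta^*\|-\E\|\theta_n-\theta^*\|$ as a Doob martingale, obtain a per-increment sensitivity constant $L_i$, and invoke a sub-Gaussian martingale concentration bound; the contraction rate $\mu(1-\beta)$ from (A3), the constant $1+\beta(3-\beta)$, and the summation-by-parts treatment of the mixing errors under (A6') all appear identically in the paper.

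Where you diverge is in the mechanism for the sensitivity estimate. The paper does \emph{not} resample the future chain. Instead it sets $g_i:=\E[\|z_n\|\mid\theta_i,\F_{i-1}]$ and shows that $g_i$ is Lipschitz in the \emph{innovation} $f_{X_i}(\theta_{i-1})$ by perturbing $\theta_i$ while keeping the future states $s_{i+1},\dots,s_n$ fixed. This yields the exact linear recursion $\Theta^i_{k+1}(\theta)-\Theta^i_{k+1}(\theta')=\prod_{j=i}^k[I-\gamma_j a_j](\theta-\theta')$, so the squared perturbation factors as a product of random matrices along a \emph{single} trajectory; the mixing error then enters only when replacing $\E[\cdot\mid\F_k]$ by $\E_\Psi[\cdot]$ inside that product, and summation by parts against the monotone $\gamma_k$ turns it into the bracketed term in $L_i$. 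Your coupling, by contrast, forces you to track $\delta_k=\theta_k-\tilde\theta_k$ across two \emph{different} state trajectories, so the recursion acquires a forcing term $\gamma_k[(a_k-\tilde a_k)\tilde\theta_k+(b_k-\tilde b_k)]$ that involves the resampled iterate $\tilde\theta_k$ itself rather than $\delta_k$. Your schematic Lyapunov inequality with a $\xi_k\sqrt{\E[V_k\mid\F_i]}$ forcing is not what this produces without further work; to close it you need an a-priori bound on $\|\tilde\theta_k\|$ (you do invoke ``boundedness of the iterates'', which the paper's route also tacitly uses in its concentration lemma but which is not among (A1)--(A6')). The paper's fixed-future perturbation sidesteps this extra coupling term entirely and gives a more direct path to the stated $L_i$.
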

\begin{proof}
Recall that $z_n:=\theta_n - \theta^*$. First, we  rewrite {$\l z_n \r^2 - E \l z_n \r^2$} as a telescoping sum of martingale differences:
\begin{align}
\l z_n \r - \E \l z_n \r
	= \sum\limits_{i=1}^{n} g_i - \E [ g_i \left| \F_{i-1} \right.] 
\label{eq:prob-equivalence}
\end{align}
where
$D_i := g_i - \E[ g_i \left| \F_{i-1} \right.]$, $g_i := \E [\l z_n\r \left| \theta_i, \F_{i-1}\right.]$, and $\F_i$ denotes the sigma algebra generated by the states of the underlying Markov chain, $\{\sigma_1,\dots,\sigma_i\}$. Note that $\theta_i$ is $\F_i$-measurable.

The above establishes that the centered error, $\l z_n\r - \E \l z_n \r$ can be written down as a sum of martingale differences with respect to the filtration $\{\F_i\}_{i = 0}^n$. 
The proof procedes by establishing that these martingale differences are Lipschitz functions of the random inovation at each time, with Lipschitz constants $L_i$.
This is the content of Lemma \ref{lemma:gi-lipschitz-random-lstd}, and it makes use of assumption (A3), (A4), and (A7).
This is the since the random innovation is, through assumptions (A2)-(A4), bounded, it is subgaussian, and we can invoke a standard concentration argument in Lemma \ref{lemma:conc-bound} to finish the bound.

%%%%%%%%%%%%%%%%%%%%%%%%%%%%%%%%%%%%%%%%%%%%%%%%%%%%%%%%%%%%%%%%%%%%%%%%%%%%%%%%%%%%%%%%
\begin{lemma}
  \label{lemma:gi-lipschitz-random-lstd}
Recall that $X_n = (s_n,s_{n+1})$ and $f_{X_n}(\theta):= \left(r(s_n,\pi(s_n)) + \beta \theta \tr \phi(s_{n+1}) - \tr \phi(s_n)\right)\phi(s_{n})$. Then, conditioned on $\F_{i-1}$, the functions $g_i$ are Lipschitz continuous in $f_{X_i}(\theta_{i-1})$, the random innovation at time $i$, with constants 
\begin{align*}
L_i := \gamma_i  \left[ e^{ -\mu(1-\beta)\sum_{k = i}^{n} \gamma_k }
						 \left( 1 + \left[ \gamma_i + 
														\sum_{k = i}^{n-1}
														\left[\gamma_k - \gamma_{k+1} \right]\right]
														\left[1 + \beta(3-\beta)\right]B'
														\right)
 								\right]^{\frac{1}{2}}.
 \end{align*}
 \end{lemma}
 \begin{proof}
 Let $\Theta_j^i(\theta)$ denote the mapping that returns the value of the iterate $\theta_j$ at instant $j$, given that $\theta_i = \theta$.
\begin{align*}
   \Theta_{j+1}^i(\theta) - \Theta_{j+1}^i(\theta') 
 	&=  \Theta_{j}^i(\theta) - \Theta_{j}^i(\theta')
 		- \gamma_{j} [ f_{X_{j}}(\Theta_{j}^i(\theta))
 											- f_{X_{j}}(\Theta_{j}^i(\theta')) ]\\
 	&= \Theta_{j}^i(\theta) - \Theta_{j}^i(\theta')
 		- \gamma_{j} [\phi(s_{j})\phi(s_{j})\tr - \beta\phi(s_{j})\phi(s_{{j+1}})\tr] 
 															(\Theta_{j}^i(\theta) - \Theta_{j}^i(\theta'))\\
	&=  [I - \gamma_{j} a_j] (\Theta_{j}^i(\theta) - \Theta_{j}^i(\theta'))
	= \left(\prod_{k = i}^{j} [I - \gamma_{k} a_k]  \right)
				(\theta - \theta'),
\end{align*}
where $a_j:= \phi(s_{j})\phi(s_{j})\tr - \beta\phi(s_{j})\phi(s_{{j+1}})\tr$.
So we have
\begin{align}
\E\left(\l\Theta_{n+1}^i(\theta) - \Theta_{n+1}^i(\theta')\r^2\mid \F_i \right)
 	= (\theta - \theta')\tr
 				\E\left[\left(\prod_{k = i}^{n}[I - \gamma_k a_k]  \right)\tr
 							\left(\prod_{k = i}^{n}[I - \gamma_k a_k]  \right)\mid \F_i \right]
				(\theta - \theta') \label{eq:lipschitz0}
\end{align}
and from the tower property of conditional expectations, it follows that
\begin{align}
&\E\left[\left(\prod_{k = i}^{n}[I - \gamma_k a_k]  \right)\tr
 							\left(\prod_{k = i}^{n}[I - \gamma_k a_k]  \right)\mid \F_i \right] \nonumber\\
	=& \E\left[\left(\prod_{k = i}^{n-1}[I - \gamma_k a_k]  \right)\tr
 							\left(I - 2\gamma_{n}
 								\E\left(  a_{n} -  \frac{\gamma_{n}}{2}a_{n}\tr a_{n} \mid \F_n \right) \right)
 							\left(\prod_{k = i}^{n-1}[I - \gamma_k a_k]  \right)\mid \F_i \right] \nonumber\\
	=& \E\left[\left(\prod_{k = i}^{n-1}[I - \gamma_k a_k]  \right)\tr
 							\left(I - 2\gamma_{n}
 								\E_{\Psi}\left( a_{n} -  \frac{\gamma_{n}}{2}a_{n}\tr a_{n} \right) \right)
 							\left(\prod_{k = i}^{n-1}[I - \gamma_k a_k]  \right)\mid \F_i \right] \nonumber\\
		&+ \E\left[\left(\prod_{k = i}^{n-1}[I - \gamma_k a_k]  \right)\tr
 							2\gamma_{n} \E\left( \epsilon_n \mid \F_n \right) 
 							\left(\prod_{k = i}^{n-1}[I - \gamma_k a_k]  \right)\mid \F_i \right] \label{eq:lipschitz1}
\end{align}
where
\begin{align*}
\epsilon_{n}': = \E\left[(a_{n} -  \gamma_{n}a_{n}\tr a_{n}/2) \mid \F_n \right] 
								- \E_{\Psi}\left[ (a_{n} -  \gamma_{n}a_{n}\tr a_{n}/2) \right]
\end{align*}

To deal with the term in the second last line of \eqref{eq:lipschitz1}, let $\Delta$ be the diagonal matrix with entries $\Delta_{i,i} = \Phi_{i,1:d}\Phi_{i,1:d}\tr$.
Then we find that, for any vector $\theta$,
\begin{align}
\theta\tr&\E_\Psi\left[a_{j+1} - \frac{\gamma_{j+1}}{2}a_{j+1}\tr a_{j+1}\right]\theta
	=\theta\tr\Phi\tr \left(I-\beta \Psi P - \frac{\gamma_{j+1}}{2}\left(\Delta - \beta P \tr \left(2 I - \beta\Delta\right) \Psi P\right)\right)\Phi\theta\nonumber\\
	&=  \theta\tr\Phi\tr \left(I-\beta \Psi\Pi P - \frac{\gamma_{j+1}}{2}\left(\Delta - \beta P \tr \Pi\tr \left(2 I - \beta\Delta\right)\Psi \Pi P\right)\right)\Phi\theta \label{eq:l1}\\
    	&\ge \vml \Phi \theta\vmr^2 - \beta \vml \Phi \theta\vmr -  \frac{\gamma_{j+1}}{2}\theta\tr\Phi\tr\left(\Delta - \beta P \tr \Pi\tr \left(2 I - \beta\Delta\right)\Psi \Pi P\right)\Phi\theta \label{eq:l22}\\
    	&\ge\vml \Phi \theta\vmr^2 - \beta \vml \Phi \theta\vmr^2 - \frac{\gamma_{j+1}}{2}\vml \Phi \theta\vmr^2 + \frac{\gamma_{j+1}}{2}\theta\tr \Phi\tr \beta P \tr \Pi\tr \left(2 I - \beta\Delta\right)\Psi \Pi P\Phi\theta \label{eq:l23}\\
	&\ge\vml \Phi \theta\vmr^2 - \beta \vml \Phi \theta\vmr^2 - \frac{\gamma_{j+1}}{2}\vml \Phi \theta\vmr^2 + \frac{\gamma_{j+1}}{2}\beta(2-\beta)\vml \Pi P\Phi\theta \vmr^2\label{eq:l2}\\
	&\ge  \mu \left(1-\beta - \frac{\gamma_{j+1}}{2}\right) \l \theta\r^2, \label{eq:l3}
\end{align}
where \eqref{eq:l1} follows from the fact that $\theta\tr\Phi\tr \Psi (I-\Pi)x =0$ since $\Pi$ is a projection, \eqref{eq:l22} by an application of Cauchy-Schwarz inequality and from the non-expansiveness property of $\Pi$ and $P$. \eqref{eq:l23} and \eqref{eq:l2} follow from the fact that, by (A4), the matrix $\Delta - I$ is positive semi-definite. The final inequality \eqref{eq:l3} follows from (A3).

For the term on the last line of \eqref{eq:lipschitz1} we notice, that by boundedness of the features,
\begin{align*}
\l\prod_{k = i}^{n-1}[I - \gamma_k a_k]  \r
	\le \prod_{k = i}^{n-1} \left( 1 + \gamma_k (1+\beta) \right).
\end{align*}
Using this observation and \eqref{eq:l3}, we can expand \eqref{eq:lipschitz0} with \eqref{eq:lipschitz1} to obtain
\begin{align}
\E&\left(\l\Theta_{n+1}^i(\theta) - \Theta_{n+1}^i(\theta')\r^2\mid \F_i \right)\nonumber\\
 	&\le x_n (\theta - \theta')
 								\E\left[
 								\left(\prod_{k = i}^{n-1}[I - \gamma_k a_k]  \right)\tr
 								\left(\prod_{k = i}^{n-1}[I - \gamma_k a_k]  \right)\mid \F_i
 								\right]
 								(\theta - \theta') \\
 		&\qquad\qquad\qquad\qquad\qquad
 			+ 2\gamma_n
 			\left(\prod_{j = i}^{n-1} (1 + \gamma_j (1+\beta) )\right)^2
 			\l\E\left( \epsilon_n \mid \F_i \right) \r 
 															  \l \theta - \theta'\r^2 \\
% 	&\le x_n (\theta - \theta')
% 								\E\left[
% 								\left(\prod_{k = i}^{n-1}[I - \gamma_k a_k]  \right)\tr
% 								\left(\prod_{k = i}^{n-1}[I - \gamma_k a_k]  \right)\mid \F_i
% 								\right]
% 								(\theta - \theta') \\
% 		&\qquad\qquad\qquad\qquad\qquad
% 			+ 2\gamma_n
% 			\left(2H\right)^2
% 			\l\E\left( \epsilon_n \mid \F_i \right) \r 
% 															  \l \theta - \theta'\r^2 \\
 	&\le \left[ \left(\prod_{k = i}^{n} x_k\right) 
 		+ 2\sum_{k = i}^{n}\gamma_k\left(\prod_{j = k+1}^{n} x_j\right)
 			\left(\prod_{j = i}^{k-1} (1 + \gamma_j (1+\beta) )\right)^2
 			\l\E\left( \epsilon_k \mid \F_i \right) \r \right]
 															  \l \theta - \theta'\r^2 \nonumber\\
 	&\le \Bigg[ e^{ -2\sum_{k = i}^{n} \gamma_k\mu \left(1-\beta - \frac{\gamma_{k}}{2}\right) }
 		+ 2\sum_{k = i}^{n}\gamma_k
 			e^{-2\sum_{j = k+1}^{n}  \gamma_j\mu \left(1-\beta - \frac{\gamma_{j}}{2}\right)
 					+ 2\sum_{j = i}^{k-1} \gamma_j (1+\beta) ) }
 			\l\E\left( \epsilon_k \mid \F_i \right) \r \Bigg]
 															  \l \theta - \theta'\r^2\\
 	&\le e^{ -2\sum_{k = i}^{n} \gamma_k\mu \left(1-\beta - \frac{\gamma_{k}}{2}\right) }
 		\left[ 1+ 2\sum_{k = i}^{n}\gamma_k
 			e^{2\sum_{j = i}^{k}  \gamma_j(\mu 
 																	\left(1-\beta - \frac{\gamma_{j}}{2}\right)
 																	+ (1+\beta) ) }
 			\l\E\left( \epsilon_k \mid \F_i \right) \r \right]
 															  \l \theta - \theta'\r^2
\end{align}
where $x_n := 1 - 2\gamma_n\mu \left(1-\beta - \frac{\gamma_{n}}{2}\right)$.
Now, using discrete integration by parts,
\begin{align*}
\sum_{k = i}^{n}&\gamma_k
 			e^{2\sum_{j = i}^{k}  \gamma_j(\mu 
 																	\left(1-\beta - \frac{\gamma_{j}}{2}\right)
 																	+ (1+\beta) ) }
 			\l\E\left( \epsilon_k \mid \F_i \right) \r\\
	=&  \gamma_i
					\sum_{k = i}^{n} e^{2\sum_{j = i}^{k}  \gamma_j(\mu 
 																	\left(1-\beta - \frac{\gamma_{j}}{2}\right)
 																	+ (1+\beta) ) }
 																	\l\E\left( \epsilon_k \mid \F_i \right) \r\\
		&+ \sum_{k = i}^{n-1}
		\left[\gamma_{k+1} - \gamma_k \right]
		 \sum_{j = k+1}^{n} e^{ 2\sum_{l = i}^{j} \gamma_l(\mu 
 																	\left(1-\beta - \frac{\gamma_{l}}{2}\right)
 																	+ (1+\beta) ) }
 																	\l\E\left( \epsilon_k \mid \F_i \right) \r\\
	\le& \left[\gamma_i 
		+ \sum_{k = i}^{n-1} \left[\gamma_{k+1} - \gamma_k \right]\right]
																		 \left[1 + \beta(3-\beta)\right]B'(s_i)
\end{align*}
where for the inequality we have used (A6) with the following bound:
\begin{align*}
\l \E(\epsilon_k\mid \F_i )\r \
\le& (1-\gamma_k/2) \l \E(\beta \phi(s_k)\phi(s_{k})\tr \mid \F_i) - \E_\Psi(\beta \phi(s_k)\phi(s_{k})\tr)\r\\
& + \beta \l \E(\beta \phi(s_k)\phi(s_{k+1})\tr \mid \F_i) - \E_\Psi(\beta \phi(s_k)\phi(s_{k+1})\tr)\r\\
& + \beta(2-\beta) \l \E(\phi(s_{k+1})\phi(s_{k+1})\tr \mid \F_i) - \E_\Psi(\phi(s_{k+1})\phi(s_{k+1})\tr )\r.
\end{align*}
From the foregoing, we have
\begin{align}
&\E\left[ \l \Theta^i_n\left(\theta\right) - \Theta^i_n\left(\theta'\right) \r\right]\\
	\le& \left[ e^{ -\mu(1-\beta)\sum_{k = i}^{n} \gamma_k }
						 \left( 1 + \left[ \gamma_i + 
														\sum_{k = i}^{n-1}
														\left[\gamma_k - \gamma_{k+1} \right]\right]
														\left[1 + \beta(3-\beta)\right]B'(s_i)
														\right)
 								\right]^{\frac{1}{2}}
 						\l \theta - \theta'\r \label{eq:hpbna}
\end{align}
Finally, we have
\begin{align*}
 &\left| \E\left[ \l \theta_n - \theta^* \r \mid \F_{i-1},\theta_i = \theta \right]
 			- \E\left[ \l \theta_n - \theta^* \r \mid \F_{i-1},\theta_i = \theta' \right] \right|_2
	\le  \E\left[ \l \Theta^i_n\left(\theta\right) - \Theta^i_n\left(\theta'\right) \r\right]\\
	&\quad\le \left[ e^{ -\mu(1-\beta)\sum_{k = i}^{n} \gamma_k }
						 \left( 1 + \left[ \gamma_i + 
														\sum_{k = i}^{n-1}
														\left[\gamma_k - \gamma_{k+1} \right]\right]
														\left[1 + \beta(3-\beta)\right]B'(s_i)
														\right)
 								\right]^{\frac{1}{2}}
 						\gamma_i \l f - f'\r\\
	&\quad\le  L_i \l f - f'\r.
\end{align*}
where $f = \theta - \theta_{i-1}$ and $f' = \theta' - \theta_{i-1}$.

\end{proof}

In the following lemma, we invoke a standard martingale concentration bound using the $L_i$-Lipschitz property of the $g_i$ functions and the assumption (A3).
\begin{lemma}\label{lemma:conc-bound}
Under the conditions of Theorem \ref{thm:high-probability-bound-td}, we have
\begin{align}
  \label{eq:thm1-concbound}
 P(   \l z_n \r - \E \l z_n \r \ge \epsilon )
\le \e(-\lambda \epsilon) \e\bigg(\dfrac{\alpha \lambda^2}{2} \sum\limits_{i=1}^{n} L^2_i \bigg).
\end{align}
\end{lemma}
\begin{proof}Note that
  \begin{align*}
 P(   \l z_n \r - \E \l z_n \r \ge \epsilon )& =  P\left(  \sum\limits_{i=1}^{n} D_i  \ge \epsilon \right)
  \le \e(-\lambda \epsilon) \E\left(\e\bigg(\lambda \sum\limits_{i=1}^{n} D_i\bigg)\right) \\
  &=  \e(-\lambda \epsilon) \E\left(\e\bigg(\lambda \sum\limits_{i=1}^{n-1} D_i\bigg) \E\bigg(\e(\lambda D_n) \left| \F_{n-1}\right.\bigg)\right).
 \end{align*}
The first equality above follows from \eqref{eq:prob-equivalence}, while the inequality follows from Markov inequality.
Now for any bounded random variable $f$, and $L$-Lipschitz function g we have
$$\E \left( \e(\lambda g(f))\right) \le \e\left( \lambda^2 L^2 / 2 \right).$$
Note that each $f_i(\theta_{i-1})$ is a bounded random variable by (A2) and (A4), and, conditioned on $\F_{i-1}$, $g_i$ is Lipschitz in $f_i(\theta_{i-1})$ with constant $L_i$ (Lemma \ref{lemma:gi-lipschitz-random-lstd}). So we obtain
\begin{align*}
 \E\left(\e(\lambda D_n) \left| \F_{n-1}\right.\right) \le \e\left(\dfrac{\lambda^2 L^2_n}{2}\right),
\end{align*}
and so
  \begin{align*}
    P(   \l z_n \r - \E \l z_n \r \ge \epsilon ) \le \e(-\lambda \epsilon) \e\bigg(\dfrac{\alpha \lambda^2}{2} \sum\limits_{i=1}^{n} L^2_i \bigg).
\end{align*}
\end{proof}
The proof of Proposition \ref{thm:high-probability-bound-td} follows by optimizing over $\lambda$ in \eqref{eq:thm1-concbound}.
\end{proof}

%%%%%%%%%%%%%%%%%%%%%%%%%%%%%%%%%%%%%%%%%%%%%%%%%%%%%%%%%%%%%%%%%%%%%%%%%%%%%%%%%%%%%%%%%%%%%%%%%%%%%%%%%%%%%%%%%%%%%%%%%%%%%%%%%%%%%%%%%%%%%%%%%
%%%%%%%%%%%%%%%%%%%%%%%%%%%%%%%%%%%%%%%%%%%%%%%%%%%%%%%%%%%%%%%%%%%%%%%%%%%%%%%%%%%%%%%%%%%%%%%%%%%%%%%%%%%%%%%%%%%%%%%%%%%%%%%%%%%%%%%%%%%%%%%%%
%%%%%%%%%%%%%%%%%%%%%%%%%%%%%%%%%%%%%%%%%%%%%%%%%%%%%%%%%%%%%%%%%%%%%%%%%%%%%%%%%%%%%%%%%%%%%%%%%%%%%%%%%%%%%%%%%%%%%%%%%%%%%%%%%%%%%%%%%%%%%%%%%
%%%%%%%%%%%%%%%%%%%%%%%%%%%%%%%%%%%%%%%%%%%%%%%%%%%%%%%%%%%%%%%%%%%%%%%%%%%%%%%%%%%%%%%%%%%%%%%%%%%%%%%%%%%%%%%%%%%%%%%%%%%%%%%%%%%%%%%%%%%%%%%%%
\subsubsection{Bound in expectation}
Now we state and prove a result bounding the expected error for general step-size sequences:
\begin{proposition}\textbf{\textit{(Bound in expectation)}}
\label{thm:expectation-bound-td}
Under (A1)-(A6) and assuming that $\gamma_n\le\mu(1-\beta)/(2(1+\beta)^2)$ for all $n$, we have,
 \begin{align}
  \E(  \l \theta_{n+1} - \theta^* \r \left\| s_0\right.) 
	\le& \Bigg[ e^{- \mu(1-\beta)\sum_{k=1}^{n}\gamma_k}  (\l z_0\r+ C)+ (1+ \l\theta^*\r) \sum_{k=1}^{n}\gamma_k^2
													e^{- \mu(1-\beta)\sum_{j=k}^{n}\gamma_j}\nonumber\\		  	 
&\quad+  C\sum_{k=1}^{n-1}(\gamma_{k+1} - \gamma_k)
											e^{- \mu(1-\beta)\sum_{j=k+1}^{n}\gamma_j}
																											 \Bigg]^\frac{1}{2}, \label{eq:thm4}
\end{align}
where $C = 2(2+\beta)(d+4)B(s_0)
		 \left(\frac{\l \theta_{0}\r + d + \l \theta^*\r}{1-\beta}\right)^2$.
% where $\Gamma_k:=\sum_{i=1}^k \gamma_i$ and $\l \theta_n \r \le H, \forall n$.
% where
% $h(k) := 2\beta\ln k$.
\end{proposition}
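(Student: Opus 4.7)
The plan is to derive a recursion for $\E(\l z_{n+1}\r^2 \mid s_0)$, where $z_n := \theta_n - \theta^*$, and then apply Jensen's inequality at the end to get the claimed bound on $\E \l z_{n+1}\r$. Writing $a_n := \phi(s_n)\phi(s_n)\tr - \beta\phi(s_n)\phi(s_{n+1})\tr$, $b_n := r(s_n,\pi(s_n))\phi(s_n)$, so that $f_{X_n}(\theta) = b_n - a_n\theta$, and recalling that $A\theta^* = b$ with $A = \E_\Psi[a_n]$ and $b = \E_\Psi[b_n]$, the update yields
\begin{align*}
\l z_{n+1}\r^2 = \l z_n\r^2 - 2\gamma_n\, z_n\tr a_n z_n + 2\gamma_n\, z_n\tr(b_n - a_n\theta^*) + \gamma_n^2 \l f_{X_n}(\theta_n)\r^2.
\end{align*}
The three terms give the stationary drift, the Markov--noise cross term, and the martingale residual, respectively.

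For the drift, one splits $a_n = A + (a_n - A)$ and invokes exactly the chain of inequalities \eqref{eq:l1}--\eqref{eq:l3} from the Lipschitz lemma to obtain $z\tr A z \ge \mu(1-\beta)\l z\r^2$; the step-size restriction $\gamma_n \le \mu(1-\beta)/(2(1+\beta)^2)$ absorbs the $\gamma_n^2 \l a_n z_n\r^2$ correction and yields a per-step contraction factor $1 - \gamma_n\mu(1-\beta)$, which will unroll to $e^{-\mu(1-\beta)\sum_k \gamma_k}$. The residual $\E[z_n\tr(a_n-A)z_n \mid s_0]$ is then merged into the Markov--noise bookkeeping together with $\E[z_n\tr(b_n - A\theta^*)\mid s_0]$.

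For the Markov noise, the tower property reduces these cross terms to inner products of $z_n$ with quantities of the form $\E[a_n \mid s_n] - A$ and $\E[b_n \mid s_n] - b$, whose $s_0$-conditional expectations are exactly the objects bounded by (A6). Assembling everything and unrolling the one-step recursion produces
\begin{align*}
\E(\l z_{n+1}\r^2 \mid s_0) \le e^{-\mu(1-\beta)\sum_{k=1}^{n}\gamma_k}\l z_0\r^2 + (1+\l\theta^*\r^2)\sum_{k=1}^{n}\gamma_k^2 e^{-\mu(1-\beta)\sum_{j=k}^{n}\gamma_j} + \sum_{k=1}^{n}\gamma_k e^{-\mu(1-\beta)\sum_{j=k+1}^{n}\gamma_j} u_k,
\end{align*}
where $u_k$ encodes the conditional mixing error at step $k$. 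Applying discrete summation by parts, $\sum_k \gamma_k u_k = \gamma_1 \sum_k u_k + \sum_{k}(\gamma_{k+1} - \gamma_k)\sum_{j\ge k+1} u_j$, and using (A6) to bound the inner exponential sums by $B(s_0)$ converts the last term into the $C\sum_{k=1}^{n-1}(\gamma_{k+1} - \gamma_k) e^{-\mu(1-\beta)\sum_{j=k+1}^{n}\gamma_j}$ piece in the statement.

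The hard part will be obtaining an a priori bound on $\l \theta_n\r$ so that the martingale residual $\l f_{X_n}(\theta_n)\r^2$ and the coefficient multiplying the mixing sum can be controlled by a deterministic constant. The statement's constant $C$ contains the factor $(\l\theta_0\r + d + \l\theta^*\r)^2/(1-\beta)^2$, which strongly suggests establishing a uniform bound of the form $\l \theta_n\r \le (\l\theta_0\r + d + \l\theta^*\r)/(1-\beta)$ by induction, exploiting that, under the step-size restriction, the mean map $I - \gamma_n A$ is a strict contraction with factor $(1-\gamma_n\mu(1-\beta))$, while the stochastic innovation $\gamma_n(b_n - a_n\theta^*)$ is bounded by (A2) and (A4). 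Once this uniform bound is in place, combining it with (A6) and the unrolling sketched above yields the stated inequality \eqref{eq:thm4}.
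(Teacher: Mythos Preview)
Your overall skeleton---expand $\l z_{n+1}\r^2$, extract the drift $-2\gamma_n z_n\tr A z_n$, isolate the mixing error and the martingale residual, unroll, and finish with Abel summation---is exactly the paper's. The genuine gap is the step you flag as ``the hard part'': the proposed uniform pathwise bound $\l\theta_n\r \le (\l\theta_0\r + d + \l\theta^*\r)/(1-\beta)$ cannot be obtained by the induction you outline. The recursion for $\theta_n$ is driven by the \emph{random} matrix $a_n = \phi(s_n)\phi(s_n)\tr - \beta\phi(s_n)\phi(s_{n+1})\tr$, not by $A$; and $I-\gamma_n a_n$ is not a contraction in general (take $d=2$, $\phi(s_n)=e_1$, $\phi(s_{n+1})=e_2$: then $(I-\gamma a_n)e_2$ has norm $\sqrt{1+\gamma^2\beta^2}>1$). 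Only the \emph{mean} map $I-\gamma_n A$ contracts, but you cannot decouple mean dynamics from noise here, because the centred noise contains the term $(a_n-A)z_n$, which itself scales with $\l z_n\r$. So the induction does not close.

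The paper sidesteps this entirely. It writes $\theta_n$ explicitly as
\[
\theta_n \;=\; \Bigl(\prod_{k=1}^{n-1}(I-\gamma_k a_k)\Bigr)\theta_0 \;+\; \sum_{k=1}^{n-1}\gamma_k\Bigl(\prod_{j=k+1}^{n-1}(I-\gamma_j a_j)\Bigr)b_k,
\]
bounds each factor by $1+\gamma_k(1+\beta)$ via (A4), and obtains a deterministic pathwise bound of order $e^{(1+\beta)\sum_{k<n}\gamma_k}(\l\theta_0\r+d)$. This bound \emph{grows} with $n$; the point is that assumption (A6) carries precisely the exponential weight $e^{3(1+\beta)\sum_j\gamma_j}$ needed to absorb this growth when the product $\l z_n\r^2\cdot\l\E(\epsilon_n\mid s_0)\r$ is summed. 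The factor $((\l\theta_0\r+d+\l\theta^*\r)/(1-\beta))^2$ in the constant $C$ arises from bounding $\bigl(\sum_k\gamma_k e^{-(1+\beta)\sum_{j<k}\gamma_j}\bigr)^2$ in that decomposition, not from any uniform iterate bound. Once you replace your induction step by this explicit-product argument, the rest of your plan goes through as in the paper.
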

\begin{proof}
 The update rule \eqref{eq:td-update-intro} can be re-written as follows:
\begin{align}
\label{eq:td-equiv}
 z_{n+1} =& \theta_n - \theta^*
 						  + \gamma_n [ \E_\Psi(f_{X_n}(\theta_n))
 									+ \E(f_{X_n}(\theta_n)\mid \F_n) - \E_\Psi(f_{X_n}(\theta_n))
 									+ f_{X_n}(\theta_n) - \E(f_{X_n}(\theta_n)\mid \F_n)],\nonumber\\
 						=& z_n
 						  + \gamma_n [ \E_\Psi(f_{X_n}(\theta_n) - f_{X_n}(\theta^*)) 
 									+ \E(f_{X_n}(\theta_n) - f_{X_n}(\theta^*)\mid \F_n)
 													 			- \E_\Psi(f_{X_n}(\theta_n) - f_{X_n}(\theta^*))\nonumber\\
 									&\qquad\qquad + f_{X_n}(\theta_n) - f_{X_n}(\theta^*)
 													 			- \E(f_{X_n}(\theta_n) - f_{X_n}(\theta^*)\mid \F_n)
 									+ f_{X_n}(\theta^*)]
\end{align}

We notice that
\begin{align}
\E_\Psi(f_{X_n}(\theta_n) - f_{X_n}(\theta^*))
 	= & \E_\Psi\left( \beta \theta_n \tr \phi(s_{n+1})- \beta {\theta^*}\tr\phi(s_{n+1})
 								- (\theta_n \tr \phi(s_{n}) - {\theta^*}\tr\phi(s_{n}))\right)\nonumber\\
 	=& \E_\Psi\left( (\theta_n - \theta^*)\tr
 									[\beta \phi(s_{n+1}) - \phi(s_n)] \phi(s_n) \right)\nonumber\\
 	=& \E_\Psi\left( \phi(s_n)[\beta \phi(s_{n+1})\tr - \phi(s_n)\tr]
 																		(\theta_n - \theta^*)  \right)\nonumber\\
 	=& -A z_n,\label{eq:term1}
\end{align}
where $A:=\Phi\tr \Psi(I - \beta P )\Phi$ (here $P=P_\pi$ denotes the one-step transition probability matrix of the underlying Markov chain induced under a stationary policy $\pi$). Noticing also that for any vector $x$
\begin{align*}
x\tr\Phi\tr\Psi P\Phi x
 = \langle x\tr\Phi\tr,  P\Phi x\rangle_\Psi^2
 \le \| x\tr\Phi\tr \|_\Psi \|  P\Phi x\|_\Psi
 =\| x\tr\Phi\tr \|_\Psi^2,
\end{align*}
where we have used Lemma 1 in \cite{tsitsiklis1997analysis} for the final equality. So we deduce that $A - (1-\beta)\mu I$ is positive definite by (A3).

Furthermore, letting
\begin{gather*}
a_n :=  \beta \phi(s_{n}) \phi(s_{n+1}) \tr - \phi(s_n) \phi(s_n)\tr
\text{, }
\epsilon_n :=  \E(a_n\mid \F_n)- \E_\Psi(a_n)
\text{, and }
\Delta M_n :=  a_n- \E(a_n\mid \F_n)),				
\end{gather*}
then, using \eqref{eq:term1}, we can rewrite \eqref{eq:td-equiv} as
\begin{align*}
z_{n+1}
	= \left[I - \gamma_n ( A + \epsilon_n + \Delta M_n )\right] z_n
 									+ \gamma_n \epsilon_n'.
\end{align*}
where $\epsilon_n' = f_{X_n}(\theta^*)-\E_{\Psi}\left(f_{X_n}(\theta^*)\right)$. Notice that $\Delta M_n$ is a martingale difference sequence with respect to the filtration $\F = \{ \F_n \}_{n\ge 1}$, and $\E(\epsilon_n\mid s_0)$ is mixing-error term. Taking the expectation of the square, and expanding, we have
\begin{align}
\E&\left(\l z_{n+1}\r^2\mid\F_n\right)
	= \E\left(\l\left[ I - \gamma_n ( A + \epsilon_n + \Delta M_n ) \right] z_n\r^2\mid \F_n \right)\\
	&\qquad\qquad\qquad\qquad+ 2\gamma_n  \E\left(f_{X_n}(\theta^*)\tr
																\left[I - \gamma_n ( A + \epsilon_n + \Delta M_n )\right] z_n\mid\F_n\right)
 												+ \gamma_n^2 (1+(1+\beta)\l\theta^*\r)^2\nonumber\\
	&= [z_n\tr[ I - 2\gamma_n ( A + \E\left(\epsilon_n\mid \F_n \right))
												+ \gamma_n^2\E\left(( A + \epsilon_n + \Delta M_n )^2\mid\F_n\right)] z_n]\\
	&\qquad				+ 2\gamma_n \E\left(\left(f_{X_n}(\theta^*)
																				-\E_{\Psi}\left(f_{X_n}(\theta^*)\right)\right)\tr
																\left[I - \gamma_n ( A + \epsilon_n + \Delta M_n )\right] z_n\mid\F_n\right)
 								+ \gamma_n^2 (1+(1+\beta)\l\theta^*\r)^2\nonumber\\
 	&\le [1 - 2\gamma_n (\mu(1-\beta) - 2\gamma_n(1+\beta)^2)]\l z_n\r^2\label{eq:td-recursion}\\
 	&\qquad				+ \gamma_n z_n\tr\E\left(\epsilon_n\mid\F_n\right)z_n
 			+ 2\gamma_n \E\left(\left(\epsilon_n'\right)\tr
																\left[I - \gamma_n ( A + \epsilon_n + \Delta M_n )\right]
																		z_n\mid \F_n\right)\nonumber
 			+ \gamma_n^2 (1+(1+\beta)\l\theta^*\r)^2
\end{align}
we have used that
$$\l A+ \epsilon_n + \Delta M_n \r= \l\beta\phi(s_{n})\phi(s_{n+1})\tr  - \phi(s_{n})\phi(s_{n})\tr)\r \le (1+\beta).$$

Before unrolling \eqref{eq:td-recursion} using the tower property of expectations, we bound the terms involving $\epsilon_n$ and $\epsilon_n'$.
We notice first that, given a fixed orthonomal basis of vectors $\{ e_i \}_{i=1}^d$. and a the associated coordinate mapping $\alpha_i(\cdot)$ for which $\phi = \sum_{i=1}^d\alpha_i(\phi)e_i$ holds true,
\begin{align*}
\theta_{n+1}
	&= \theta_{n} + \gamma_{n} \left(r(s_n,\pi(s_n)) + \beta \theta_{n}\tr \phi(s_{n+1}) 
																							- \theta_{n}\tr \phi(s_n)\right) \phi(s_{n})\\
	&= (1+\gamma_{n}a_n)\theta_n+ \gamma_nr(s_n,\pi(s_n))\phi(s_n)\\
	&= \prod_{k = 1}^n(1+\gamma_{k}a_n) \theta_{0}
			+ \sum_{k = 1}^n \gamma_k\left(\prod_{j = k}^n(1+\gamma_{j}a_j)\right)
																				r(s_k,\pi(s_k))\phi(s_k)\\
	&= \left(\prod_{k = 1}^n(1+\gamma_{k}a_n) \right)\theta_{0}
			+ \sum_{i=1}^d\left[\sum_{k = 1}^n \alpha_i(\phi(s_k))r(s_k,\pi(s_k))
							\gamma_k\left(\prod_{j = k}^n(1+\gamma_{j}a_j)\right)\right]e_i.
\end{align*}
Now, if $A$ is a possibly random matrix such that $\l A\r\le C$, and $\theta$ is a fixed vector, then
\begin{align*}
\E\left(\theta\tr A\tr\E(\epsilon_n\mid\F_n)A\theta\mid s_0\right)
	\le \E\left(C^2\theta\tr\E(\epsilon_n\mid\F_n)\theta\mid s_0\right)
	\le C^2 \theta\tr \E\left(\epsilon_n\mid s_0\right) \theta.
\end{align*}
Furthermore, for any norm $\|\cdot\|$, $\| \sum_{i=1}^d x_i\|^2\le d\sum_{i=1}^d\| x_i\|^2$. From these observations we can deduce that
\begin{align*}
\E&\left(z_{n+1}\E(\epsilon_n\mid\F_n)\tr z_{n+1}\mid s_0\right)\\
	&\le (d+2)\left(e^{2(1 + \beta)\sum_{k = 1}^n\gamma_{k}}\l \theta_{0}\r^2
			+ \left(\sum_{k = 1}^n \gamma_k e^{(1 + \beta)\sum_{j = k}^n\gamma_{j}}\right)^2
			+ \l\theta^*\r^2\right)
			\l\E(\epsilon_n\mid s_0)\r\\
	&\le (d+2)\left(e^{2(1 + \beta)\sum_{k = 1}^n\gamma_{k}}\l \theta_{0}\r
			+ e^{2(1 + \beta)\sum_{k = 1}^n\gamma_{k}}
				\left(\sum_{k = 1}^n \gamma_k e^{-(1 + \beta)\sum_{j = 1}^{k-1}\gamma_{j}}\right)^2
			+ \l\theta^*\r^2\right)
			\l\E(\epsilon_n\mid s_0)\r\\
	&\le (d+2)\frac{\l \theta_{0}\r^2 + 1 + \l \theta^*\r^2}{\left( 1-\beta \right)^2}
				e^{2(1 + \beta)\sum_{k = 1}^n\gamma_{k}}
				\l\E(\epsilon_n\mid s_0)\r.
\end{align*}
Similarly, using Cauchy-Schwarz,
\begin{align*}
\E&\left(\E\left(\left(\epsilon_n'\right)\tr
	\left[I - \gamma_n ( A + \epsilon_n + \Delta M_n )\right] z_n\mid \F_n\right)\mid s_0\right)\\
	&= \E\left(\left(\epsilon_n'\right)\tr
							\left[I - \gamma_n ( A + \epsilon_n + \Delta M_n )\right]
							\prod_{k = 1}^n(1+\gamma_{k}a_n)\mid s_0\right)\theta_0\\
	&\qquad + \sum_{i=1}^d\E\left(\left(\epsilon_n'\right)\tr
							\left[I - \gamma_n ( A + \epsilon_n + \Delta M_n )\right]
							\sum_{k = 1}^n \gamma_k \alpha_i(\phi(s_k))r(s_j,\pi(s_j))
									\left(\prod_{j = k}^n (1+\gamma_{j}a_j)\right)
									\mid s_0\right)e_i\\
	&\qquad  - 	\left[I - \gamma_n ( A + \epsilon_n + \Delta M_n )\right]
							\E\left(\left(\epsilon_n'\right)\tr\mid s_0\right)\theta^*\\
	&\le 2e^{(1 + \beta)\sum_{k = 1}^n\gamma_{k}}
									\l \E\left(\left(\epsilon_n'\right)\mid s_0\right) \r \l \theta_{0} \r\\
	&\qquad 	+ 2d\l \E \left(\left(\epsilon_n'\right)\mid s_0\right)\r
									 e^{(1 + \beta)\sum_{k = 1}^n\gamma_{k}}
									\sum_{k = 1}^n \gamma_k e^{-(1 + \beta)\sum_{j = 1}^{k-1}\gamma_{j}}
					+ 2\l \E \left(\left(\epsilon_n'\right)\mid s_0\right)\r\l\theta^*\r\\
	&\le2\frac{\l \theta_{0}\r + d + \l \theta^*\r}{1-\beta}
					e^{(1 + \beta)\sum_{k = 1}^n\gamma_{k}}
					\l \E \left(\left(\epsilon_n'\right)\mid s_0\right)\r
\end{align*}
So we can unroll \eqref{eq:td-recursion} using the tower property of conditional expectations to obtain:
\begin{align*}
\E&(\l z_{n+1}\r^2\mid s_0)
	\le \tpi_n z_0 + (1+ (1+\beta)\l\theta^*\r)^2 \sum_{k=1}^{n}\gamma_k^2 \tpi_n\tpi_k^{-1} \\
		 &	\qquad + (d+2)\frac{\l \theta_{0}\r^2 + d + \l \theta^*\r^2}{\left( 1-\beta \right)^2}
											\sum_{k=1}^{n}\gamma_k\tpi_n\tpi_k^{-1}
											e^{2(1 + \beta)\sum_{j  = 1}^k\gamma_{j}}
																										\l\E(\epsilon_k\mid s_0)\r\\
		 &\qquad	 + 4\frac{\l \theta_{0}\r + d + \l \theta^*\r}{1-\beta}
											\sum_{k=1}^{n}\gamma_k\tpi_n\tpi_k^{-1}
											e^{(1 + \beta)\sum_{j  = 1}^k\gamma_{j}}
																										\l\E(\epsilon_k'\mid s_0)\r\\
	&\le  e^{-\sum_{k=1}^{n}2\gamma_k (\mu(1-\beta) - 2\gamma_k(1+\beta)^2)} z_0
			+ (1+ (1+\beta)\l\theta^*\r)^2 \sum_{k=1}^{n}\gamma_k
													e^{-\sum_{j=k}^{n}\gamma_j (\mu(1-\beta) - 2\gamma_j(1+\beta)^2)} \\
		 &	\qquad + (d+2)\frac{\l \theta_{0}\r^2 + d + \l \theta^*\r^2}{\left( 1-\beta \right)^2}
											\sum_{k=1}^{n}\gamma_k
											e^{-\sum_{j=k}^{n}\gamma_j (\mu(1-\beta) - 2\gamma_j(1+\beta)^2)}
											e^{2(1 + \beta)\sum_{j  = 1}^k\gamma_{j}}
																										\l\E(\epsilon_k\mid s_0)\r \\
		 &	\qquad	 + 4\frac{\l \theta_{0}\r + d + \l \theta^*\r}{1-\beta}
											\sum_{k=1}^{n}\gamma_k
											e^{-\sum_{j=k}^{n}\gamma_j (\mu(1-\beta) - 2\gamma_j(1+\beta)^2)}
											e^{(1 + \beta)\sum_{j  = 1}^k\gamma_{j}}
																										\l\E(\epsilon_k'\mid s_0)\r
\end{align*}
 where $\tpi_n:=\prod_{k=1}^{n}\left(1 - 2\gamma_n (\mu(1-\beta) - 2\gamma_n(1+\beta)^2)\right)$.

Using that $\gamma_n< \mu(1-\beta)/(2(1+\beta)^2)$ for all $n$, we have that 
\begin{align*}
\E(\|z_{n+1}\|\mid s_0)
	\le& \left[\E(\l z_{n+1}\r^2\mid s_0)\right]^\frac{1}{2}\\
	\le& \Bigg[ e^{- \mu(1-\beta)\sum_{k=1}^{n}\gamma_k}  z_0
			+ (1+ (1+\beta)\l\theta^*\r)^2 \sum_{k=1}^{n}\gamma_k^2
													e^{- \mu(1-\beta)\sum_{j=k}^{n}\gamma_j} \\
		 &	 + (d+2)\frac{\l \theta_{0}\r^2 + d + \l \theta^*\r^2}{\left( 1-\beta \right)^2}
											\sum_{k=1}^{n}\gamma_k
											e^{- \mu(1-\beta)\sum_{j=k}^{n}\gamma_j} 
											e^{2(1 + \beta)\sum_{j  = 1}^k\gamma_{j}}
																										\l\E(\epsilon_k\mid s_0)\r\\
		 &	 + 4\frac{\l \theta_{0}\r + d + \l \theta^*\r}{1-\beta}
											\sum_{k=1}^{n}\gamma_k
											e^{- \mu(1-\beta)\sum_{j=k}^{n}\gamma_j} 
											e^{(1 + \beta)\sum_{j  = 1}^k\gamma_{j}}
																										\l\E(\epsilon_k'\mid s_0)\r
							\Bigg]^\frac{1}{2}
\end{align*}
Using discrete integration by parts, we have that
\begin{align*}
\sum_{k=1}^{n}&
			e^{- \mu(1-\beta)\sum_{j=k}^{n}\gamma_j} 
			e^{2(1 + \beta)\sum_{j  = 1}^k\gamma_{j}}
							\l\E(\epsilon_k\mid s_0)\r\\
	\le& e^{- \mu(1-\beta)\sum_{k=1}^{n}\gamma_j} \sum_{j=1}^{n}
											e^{2(1 + \beta)\sum_{j  = 1}^k\gamma_{j}}
																										\l\E(\epsilon_k\mid s_0)\r\\
				&+ \sum_{k=1}^{n-1}(\gamma_{k+1} - \gamma_k)
									e^{- \mu(1-\beta)\sum_{j=k+1}^{n}\gamma_j}
								\sum_{j=1}^{k}e^{2(1 + \beta)\sum_{l  = 1}^j\gamma_{l}}
																										\l\E(\epsilon_j\mid s_0)\r\\
	\le& (1+\beta)B(s_0)
			\left[e^{- \mu(1-\beta)\sum_{k=1}^{n}\gamma_j}
				 	+ \sum_{k=1}^{n-1}(\gamma_{k+1} - \gamma_k)
									e^{- \mu(1-\beta)\sum_{j=k+1}^{n}\gamma_j}\right].
\end{align*}
Treating the mixing term involving $\epsilon_k'$ similarly, we find that:
\begin{align*}
\E(&\l z_{n+1}\r\mid s_0)
	\le \left[\E(\l z_{n+1}\r^2\mid s_0)\right]^\frac{1}{2}\\
	\le& \Bigg[ e^{- \mu(1-\beta)\sum_{k=1}^{n}\gamma_k}  \l z_0\r
			+ (1+ \l\theta^*\r) \sum_{k=1}^{n}\gamma_k^2
													e^{- \mu(1-\beta)\sum_{j=k}^{n}\gamma_j} \\
		 &	 +\left[ (d+2)\frac{\l \theta_{0}\r^2 + d + \l \theta^*\r^2}{\left( 1-\beta \right)^2}
											(1+\beta)B(s_0)
		    + 4\frac{\l \theta_{0}\r + d + \l \theta^*\r}{1-\beta}
											(2+\beta)B(s_0) \right]\\
		 & \qquad\qquad\qquad\qquad\qquad\qquad
		 						\times\left[e^{- \mu(1-\beta)\sum_{k=1}^{n}\gamma_j}
				 							+ \sum_{k=1}^{n-1}(\gamma_{k+1} - \gamma_k)
											e^{- \mu(1-\beta)\sum_{j=k+1}^{n}\gamma_j}\right]
																											 \Bigg]^\frac{1}{2}\\
	\le& \Bigg[ e^{- \mu(1-\beta)\sum_{k=1}^{n}\gamma_k}  \l z_0\r
			+ (1+ \l\theta^*\r) \sum_{k=1}^{n}\gamma_k^2
													e^{- \mu(1-\beta)\sum_{j=k}^{n}\gamma_j} \\
		 &	 + 2(2+\beta)(d+4)B(s_0)
		 \left(\frac{\l \theta_{0}\r + d + \l \theta^*\r}{1-\beta}\right)^2 
											\left[e^{- \mu(1-\beta)\sum_{k=1}^{n}\gamma_j}
				 							+ \sum_{k=1}^{n-1}(\gamma_{k+1} - \gamma_k)
											e^{- \mu(1-\beta)\sum_{j=k+1}^{n}\gamma_j}\right]
																											 \Bigg]^\frac{1}{2}\\
	\le& \Bigg[ e^{- \mu(1-\beta)\sum_{k=1}^{n}\gamma_k}  (\l z_0\r+ C)\\
		 & \qquad\qquad\qquad + (1+ \l\theta^*\r) \sum_{k=1}^{n}\gamma_k^2
													e^{- \mu(1-\beta)\sum_{j=k}^{n}\gamma_j}
		  	 +  C\sum_{k=1}^{n-1}(\gamma_{k+1} - \gamma_k)
											e^{- \mu(1-\beta)\sum_{j=k+1}^{n}\gamma_j}
																											 \Bigg]^\frac{1}{2}
\end{align*}
where $C = 2(2+\beta)(d+4)B(s_0)
		 \left(\frac{\l \theta_{0}\r + d + \l \theta^*\r}{1-\beta}\right)^2$.
\end{proof}

%%%%%%%%%%%%%%%%%%%%%%%%%%%%%%%%%%%%%%%%%%%%%%%%%%%%%%%%%%%%%%%%%%%%%%%%%%%%%%%%%%%%%%%%%%%%%%%%%%%%%%%%%%%%%%%%%%%%%%%%%%%%%%%%%%%%%%%%%%%%%%%%%
%%%%%%%%%%%%%%%%%%%%%%%%%%%%%%%%%%%%%%%%%%%%%%%%%%%%%%%%%%%%%%%%%%%%%%%%%%%%%%%%%%%%%%%%%%%%%%%%%%%%%%%%%%%%%%%%%%%%%%%%%%%%%%%%%%%%%%%%%%%%%%%%%
%%%%%%%%%%%%%%%%%%%%%%%%%%%%%%%%%%%%%%%%%%%%%%%%%%%%%%%%%%%%%%%%%%%%%%%%%%%%%%%%%%%%%%%%%%%%%%%%%%%%%%%%%%%%%%%%%%%%%%%%%%%%%%%%%%%%%%%%%%%%%%%%%
%%%%%%%%%%%%%%%%%%%%%%%%%%%%%%%%%%%%%%%%%%%%%%%%%%%%%%%%%%%%%%%%%%%%%%%%%%%%%%%%%%%%%%%%%%%%%%%%%%%%%%%%%%%%%%%%%%%%%%%%%%%%%%%%%%%%%%%%%%%%%%%%%

\subsubsection{Derivation of rates}

\begin{proof}\textbf{\textit{(High probability bound in Theorem \ref{thm:td-rate})}}\ \\
Note that when $\gamma_n = \frac{c_0 c}{(c+n)}$,
\begin{align*}
\sum_{i=1}^{n} L_i^2
 \le& \sum_{i=1}^{n}\frac{c_0^2c^2}{(c+i)^2}
 									\Bigg[e^{-\mu (1-\beta)c_0c \sum\limits_{j=i}^{n}\frac{1}{(c+j)}}
 									\left[1 + \frac{c_0 c}{c+i}
 											+ \sum_{k = i}^{n-1} \left(\frac{c_0c}{c+k}- \frac{c_0c}{c+k+1}\right)
 														\right]
 									\left[1 + \beta(3-\beta)\right]B'\Bigg]\\
 \le& \left[1 + \beta(3-\beta)\right]B'
 									\sum_{i=1}^{n}\frac{c_0^2c^2}{(c+i)^2}
 									\left(\frac{c+i}{c+n}\right)^{\mu(1-\beta)c_0c}
 									\left[2 + \sum_{k = i}^{n-1} \frac{c_0c}{(c+k)(c+k+1)}\right]\\
 \le& \frac{\left[2 + c_0c\right]\left[1 + \beta(3-\beta)\right]B'}{(c+n)^{\mu(1-\beta)c_0c}}
 									\sum_{i=1}^{n}\frac{c_0^2c^2}{(c+i)^{2 - \mu(1-\beta)c_0c}}
\end{align*}
We now find three regimes for the rate of convergence, based on the choice of $c$:\\
\begin{inparaenum}[\bfseries(i)]
\item $\sum_{i=1}^{n} L_i^2 = O\left((n+c)^{\mu (1-\beta)^2c }\right)$ when $\mu (1-\beta)c_0c \in(0,1)$, \\
\item $\sum_{i=1}^{n} L_i^2 = O\left(n^{-1}(\ln n)^2\right)$ when $\mu (1-\beta)c_0c =1$, and\\
\item $\sum_{i=1}^{n} L_i^2
					\le \frac{\left[2 + c_0c\right]\left[1 + \beta(3-\beta)\right]c_0^2c^2B'}
									{\mu (1-\beta)c_0 c -1 }(n+c)^{-1}$ when $\mu (1-\beta)c_0c\in(1,\infty)$.\\
\end{inparaenum}
(We have used comparisons with integrals to bound the summations.)
Setting $c$ so that we are in regime (iii), the high probability bound from Proposition \ref{thm:high-probability-bound-td} gives 
\begin{align}
\label{eq:prob-bound-1byn}
\P(   \l \theta_n - \theta^* \r - \E \l \theta_n - \theta^*\r \ge \epsilon ) \le \e\left(- \dfrac{\epsilon^2 (n+c)}{2 K_{\mu, c, \beta}}\right)
\end{align}
where $K_{\mu,c,\beta}
						:=\frac{\left[2 + c_0c\right]\left[1 + \beta(3-\beta)\right]c_0^2c^2B'}
									{\mu (1-\beta)c_0 c -1 }(n+c)^{-1}$.
\end{proof}

\begin{proof}\textbf{\textit{(Expectation bound in Theorem \ref{thm:td-rate})}} \ \\
For the expectation bound, the rate derivation involves bounding each term on the RHS in \eqref{eq:thm4} after choosing step-sizes $\gamma_n= \frac{c_0c}{(c+n)}$.
Supposing that $c$ is again chosen so that $(1-\beta)c_0\mu c> 1$ we have:
\begin{align*}
&\sum_{k=1}^{n}\gamma_k^2 e^{- \mu(1-\beta)\Gamma_k^n}
	\le \sum_{k=1}^{n}\frac{c_0^2c}{(c+k)^2}
										e^{- \mu(1-\beta)c_0c\sum_{i=k}^n\frac{1}{c+i}}\\
	&\le \frac{c_0^2c^2}{(c+n)^(\mu(1-\beta)c_0c-1)}
										\sum_{k=1}^{n}\frac{c_0^2c}{(c+k)^{2 - \mu(1-\beta)c_0c}}
	\le \frac{c_0^2c^2}{\mu(1-\beta)c_0c-1}\frac{1}{c+n}
\end{align*}
and similarly
\begin{align*}
\sum_{k=1}^{n-1}(\gamma_{k+1} - \gamma_k)
											e^{- \mu(1-\beta)\Gamma_{k+1}^n}
	\le \frac{2c_0c}{\mu(1-\beta)c_0c-1}\frac{1}{c+n}
\end{align*}
where we have again compared the sums with integrals. Also
\begin{align*}
e^{-(1-\beta)\mu\Gamma_1^n}
	\le \left(\frac{c}{n+c}\right)^{(1-\beta)\mu c_0 c} \le \left(\frac{c}{n+c}\right)^{\frac{1}{2}}.
\end{align*}
So from Proposition \ref{thm:expectation-bound-td}
\begin{align}
\label{eq:expectation-bound-1byn}
\E \l \theta_n - \theta^* \r
	\le \left( \frac{c(\l\theta_0 - \theta^*\r+C)}{(n+c)^{\mu(1-\beta)c_0c-1}} 
					+  \frac{(1+\l\theta^*\r)c_0^2c^2 + C c_0 c}{\mu(1-\beta)c_0c-1} \right)
		 (c+n)^{-\frac{1}{2}},
\end{align}
and the result in Theorem \ref{thm:td-rate} follows.
\end{proof}

%In order to obtain the rates and constants presented in Theorem \ref{thm:td-rate}, we specialize Theorems \ref{thm:expectation-bound-td} and \ref{thm:high-probability-bound-td} for the choice of step-size sequence, $\gamma_n= \frac{c_0c}{(c+n)}$, with $c_0< \mu(1-\beta)/(2(1+\beta)^2)$ and $c$ such that $\mu (1-\beta)c_0c \in(0,1)$.

\subsection{Iterate Averaging: Proof of Theorem \ref{thm:td-avg-rate}}
\label{sec:it-av-proof}
In order to prove the results in Theorem \ref{thm:td-avg-rate} we again consider the case of a general step sequence. Recall that $\bar \theta_{n+1} := (\theta_1+\ldots+\theta_n)/n$ and let $z_n = \bar \theta_{n+1} - \theta^*$.
We directly give a bound on the error in high probability for the averaged iterates (the bound in expectation can be obtained directly from the bound in Theorem \ref{thm:expectation-bound-td}):
\begin{proposition}
\label{thm:high-prob-bound-td-avg}
Suppose that $\forall n>n_0$, $\gamma_n\le\mu(1-\beta)/(2(1+\beta)^2)$. Then, under (A1)-(A5) and (A6'), and we have, $\forall\epsilon \ge 0$ and $\forall n>n_0$,
\begin{align*}
&P(   \l z_n \r - \E \l z_n \r \ge \epsilon ) \le e^{- \epsilon^2\left( 2 \sum_{i=1}^{n} L^2_i\right)^{-1} }, 
\end{align*}
where
$L_i := \frac{\gamma_i}{n}
				\bigg( 1 +\sum_{l=i+1}^{n-1}
								\Big[ e^{ -\mu(1-\beta)\sum_{k = i}^{l}\gamma_k}
						 					  ( 1 + [ \gamma_i + 
											  \sum_{k = i}^{l} [\gamma_k - \gamma_{k+1} ]]
											  [1 + \beta(3-\beta)]B' ) \Big]^{\frac{1}{2}}
 								\bigg)$.
\end{proposition}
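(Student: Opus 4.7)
The plan is to follow the same two-step template used in the proof of Proposition~\ref{thm:high-probability-bound-td}: first decompose the centered error $\l z_n \r - \E \l z_n \r$ as a sum of martingale differences, then obtain a Lipschitz-in-innovation bound on each difference, and finally invoke the subgaussian concentration argument of Lemma~\ref{lemma:conc-bound}. Concretely, set $g_i := \E[\l z_n\r \mid \theta_i, \F_{i-1}]$ and $D_i := g_i - \E[g_i\mid \F_{i-1}]$; then $\l z_n\r - \E\l z_n\r = \sum_{i=1}^n D_i$ telescopes and the problem reduces to identifying a Lipschitz constant $L_i$ for $g_i$ viewed as a function of the random innovation $f_{X_i}(\theta_{i-1})$.

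The novelty relative to Proposition~\ref{thm:high-probability-bound-td} lies in how a perturbation of $\theta_i$ propagates through the averaged iterate $\bar\theta_{n+1} = \tfrac{1}{n}\sum_{l=1}^n \theta_l$. Such a perturbation only affects $\theta_l$ for $l\ge i$, so using the mapping $\Theta_l^i(\theta)$ from Lemma~\ref{lemma:gi-lipschitz-random-lstd} and the triangle inequality,
\begin{align*}
\E\l \bar\Theta_{n+1}^{i}(\theta) - \bar\Theta_{n+1}^{i}(\theta') \r
   \le \frac{1}{n}\sum_{l=i}^{n} \E\l \Theta_l^i(\theta) - \Theta_l^i(\theta') \r .
\end{align*}
Each summand can now be bounded by exactly the expression derived in (\ref{eq:hpbna}), read with the intermediate terminal time $l$ in place of $n$: the $l=i$ term contributes the factor $1$ (accounting for the leading $1$ in the stated $L_i$), while the terms with $l\ge i+1$ contribute the bracketed exponential-step-sum expression. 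Substituting $\theta-\theta' = \gamma_i (f - f')$, where $f,f'$ are two values of the innovation at step $i$, assembles the stated form of $L_i$. Given this Lipschitz property, the remainder is identical to Lemma~\ref{lemma:conc-bound}: by (A2) and (A4) the innovation is bounded, hence subgaussian, so $\E[e^{\lambda D_i}\mid \F_{i-1}] \le \exp(\lambda^2 L_i^2 / 2)$; chaining with the tower property, applying Markov, and optimising over $\lambda$ yields the stated bound with variance proxy $\sum_i L_i^2$.

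The main obstacle I anticipate is the bookkeeping in the "per-terminal-time $l$" version of Lemma~\ref{lemma:gi-lipschitz-random-lstd}: the derivation of (\ref{eq:hpbna}) was carried out only for the final horizon $n$, and I need to verify that the entire chain of inequalities (the use of (A3) in \eqref{eq:l3}, the Cauchy--Schwarz step \eqref{eq:l22}, the mixing estimate from (A6')/(A6) on $\E(\epsilon_k\mid \F_i)$, and the discrete integration-by-parts step) remains valid \emph{uniformly} over all intermediate terminal times $l \in \{i,\dots,n\}$. The hypothesis $\gamma_n \le \mu(1-\beta)/(2(1+\beta)^2)$ for $n > n_0$ is exactly what guarantees that the matrix products $\prod_k [I - \gamma_k a_k]$ are contractive in expectation across every such horizon, which is why the proposition is stated only for $n>n_0$. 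The remaining off-by-one in the summation range ($l$ ranging up to $n-1$) is a cosmetic matter of whether the terminal iterate is folded into the leading $1$ or into the trailing sum.
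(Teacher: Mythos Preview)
Your proposal is correct and follows essentially the same route as the paper: the same martingale decomposition of the centered error, the same reduction to a Lipschitz-in-innovation bound via the per-terminal-time estimate \eqref{eq:hpbna} summed over $l$, and the same concluding appeal to Lemma~\ref{lemma:conc-bound}. The only cosmetic difference is that the paper parameterizes the state at time $k$ by the pair $(\bar\theta_{k-1},\theta_k)$ and carries an extra $\tfrac{k-1}{n}\l\zeta^1-{\zeta^1}'\r$ term in \eqref{eq:av1}, whereas you observe directly that $\bar\theta_{i-1}$ is $\F_{i-1}$-measurable and hence unperturbed by the innovation at step $i$, so only the $\theta_l$, $l\ge i$, contribute; both yield the same $L_i$.
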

\begin{proof}
The overall schema of this  proof is similar to that used for proving Proposition \ref{thm:high-probability-bound-td}, which is used for establishing a high probability bound of non-averaged TD(0).

We first decompose the centered error $ \l z_n \r - \E \l z_n \r$ into a sum of martingale differences as follows:
\begin{align}
 \l z_n \r - \E \l z_n \r
 	= \sum\limits_{k=1}^{n} D_k, \label{eq:prob-equivalence-averaged}
\end{align}
where $D_k := g_k - \E [ g_k \left| \F_{k-1} \right.]$ and $g_k := \sum_{k=1}^{l} \E [\l z_n\r \left| \F_k \right.]$.

We need to prove that the functions $g_k$ are Lipschitz continuous in the random innovation at time $k$ with the new constants $L_k$.
Let $ \zeta^1_k$ is the value of the averaged iterate $\bar \theta_{k-1}$ at instant $k$, $\zeta^2_k$ is the value of the iterate $\theta_k$ at instant $k$,
and let $\bar\Theta_j^k(\zeta)$ denote the mapping that returns the value of the averaged iterate at instant $j$, $\bar\theta_j$, given that $\bar\theta_{k-1} = \zeta^1$ and $\theta_k = \zeta^2$. Then, we have
\begin{align}
&\E \l \bar\Theta_{n}^k\left(\zeta\right) - \bar\Theta_{n}^k\left(\zeta'\right) \r
	\le \E\l \frac{k-1}{n}\left(\zeta^1 - {\zeta^1}'\right)
		+\frac{1}{n} \sum_{l = k}^n\left(\Theta^k_l\left(\zeta^2\right) - \Theta^k_l\left({\zeta^2}'\right)\right) \r
						\nonumber \\
	&\le  \dfrac{k-1}{n} \l \zeta^1 - \zeta'^1\r  \nonumber \\
	&\qquad+ \dfrac{1}{n} \sum\limits_{l=k}^{n}
												 \Big[ e^{ -\mu(1-\beta)\sum_{j = k}^{l}\gamma_j}
						 					  ( 1 + [ \gamma_k + 
											  \sum_{j =k}^{l} [\gamma_j - \gamma_{j+1} ]]
											  [1 + \beta(3-\beta)]B' ) \Big]^{\frac{1}{2}}
																					\l \zeta^2 - \zeta'^2\r. \label{eq:av1}
\end{align}
In the above, we have used \eqref{eq:hpbna} derived in the proof of Proposition \ref{thm:high-probability-bound-td} to bound $\l \Theta^k_l\left(\zeta^2\right) - \Theta^k_l\left({\zeta^2}'\right)\r$.
The rest of the proof amounts to showing that $g_k$ (and also $D_k$) is $L_k$-Lipschitz in the random innovation at time $k$ and this follows in a similar manner as the proof of Proposition \ref{thm:high-probability-bound-td}.
\end{proof}

\subsubsection{Derivation of rates: High Probability Bound in Theorem \ref{thm:td-avg-rate}}  
\label{sec:proof-lemma-it-av}
For the rate of the bound in high probability, one has to again separately bound the influence of the first $n_0$ steps, and then use the expectation bound together with \ref{thm:high-prob-bound-td-avg}. 
\begin{proof}\textbf{\textit{(High probability bound in Theorem \ref{thm:td-avg-rate})}}
We perform the calculation:
\begin{align*}
&\sum_{i = n_0}^n L_i^2
	= \sum_{i = n_0}^n
		\left[\frac{\gamma_i}{n}
				\left( 1 +\sum_{l=i+1}^{n-1}
								\left[ e^{ -\mu(1-\beta)\Gamma_i^n}
						 					  \left( 1 + \left[ \gamma_i + 
											  \sum_{k = i}^{n-1} \left[\gamma_k - \gamma_{k+1} \right]\right]
											  \left[1 + \beta(3-\beta)\right]B' \right) \right]^{\frac{1}{2}}
 								\right)\right]^2\\
	&\le \sum_{i = 1}^n
			\left[\frac{\gamma_i}{n}
				\left( 1 + \left( 1 + \left[ 1 +  C' \right] \left[1 + \beta(3-\beta)\right]B' \right)
								\sum_{l=i+1}^{n-1} e^{ -\frac{\mu(1-\beta)}{2}\Gamma_i^n}
 										\right)\right]^2\\
	&= \frac{1}{n^2}
		 \sum_{i = 1}^n\left[ c_0\left(\frac{c}{c+i}\right)^\alpha
		 						\left( 1 + \left( 1 + \left[ 1 +  C' \right] \left[1 + \beta(3-\beta)\right]B' \right)
								\sum_{l=i+1}^{n-1} e^{ -\frac{\mu(1-\beta)}{2}\Gamma_i^n}
 										\right)\right]^2\\
	&\le \left[ 1 +  C' \right] \left[1 + \beta(3-\beta)\right]B'
			\left(\frac{2c_0}{\mu(1-\beta)c_0n}\right)^2
			\sum_{i = 1}^n\bigg[\left(\frac{c+i+2}{c+i}\right)^\alpha+\\
 	&\qquad\qquad\qquad\qquad\qquad\qquad
 												\frac{1}{(c+i)^\alpha}\sum_{l=i}^{n-1}
 												e^{-\frac{\mu(1-\beta)c_0}{2}
 																	 c^\alpha\frac{ ((c+l)^{1-\alpha}- (c+i)^{1-\alpha})}{1-\alpha}}		
 												.((c+l+2)^\alpha-(c+l+1)^\alpha)\Bigg]^2\\
	&\le \left[ 1 +  C' \right] \left[1 + \beta(3-\beta)\right]B' 
			\left(\frac{2}{\mu(1-\beta)n}\right)^2
				\left\{ 3^\alpha + \left[\frac{4\alpha}{\mu(1-\beta)c_0c^{\alpha}}
						+ \frac{2^\alpha}{\alpha}\right]^2\right\}n
\end{align*}
where $C' := \sum_{k = 1}^\infty k^{-2\alpha}$.
In the second equality we have substituted $\gamma_i = (1-\beta)\left(\frac{c}{c+n}\right)^{\alpha}$. For the second inequality we have used discrete integration by parts (see page 15 in \cite{fathi2013transport}, display (2.2), for details). For the last inequality we have noted, as in  page 15 in \cite{fathi2013transport}, that
\begin{align}
\sum_{l=i+1}^{n-1}&
				e^{-\frac{\mu(1-\beta)c_0}{2}
					c^{\alpha}(1-\beta)^2
					((c+l)^{1-\alpha} - (c+i)^{1-\alpha})/(1- \alpha)}
				\left((c+l+2)^\alpha - (c+l+1)^\alpha\right) \nonumber\\
	\le& \frac{1}{1-\alpha}
				e^{\frac{\mu(1-\beta)c_0}{2}
					c^{\alpha}(1-\beta)^2 (c+i)^{1-\alpha}/(1-\alpha)}
					\int_{(c+i+1)^{1-\alpha}}^{(c+n)^{1-\alpha}}
														e^{\frac{\mu(1-\beta)c_0}{2}c^{\alpha}(1-\beta)^2 l/(1-\alpha)}
														l^{\frac{2\alpha-1}{1-\alpha}}dl. \label{eq:it-av-1}
\end{align}
Now, by taking the derivative and setting it to zero, we find that
$$l\mapsto
		e^{\frac{\mu(1-\beta)c_0}{2}c(1-\beta) l/(1-\alpha)}
		l^{\frac{2\alpha}{1-\alpha}}$$
 is decreasing on $[4\alpha/(\mu(1-\beta)c_0) c^{\alpha}(1-\beta),\infty)$, and so we deduce that \eqref{eq:it-av-1}$\le (c+i+1)^\alpha/\alpha$  when $c+i\ge4\alpha/\left(\mu(1-\beta)c_0\right)$. When $c+i<4\alpha/\left(\mu(1-\beta)c_0\right)$ we use that the summand is bounded by $1$.
\end{proof}

\subsubsection{Derivation of rates: Expectation bound in Theorem \ref{thm:td-avg-rate}}
\label{sec:proof-lemma-it-av-expectation}
To bound the expected error we average the errors of the non-averaged iterates. However, this averaging this is not straightforward as the bound in Theorem \ref{thm:expectation-bound-td} holds only if $n > n_0$ (which ensures that $\gamma_n$ is sufficiently small). Note that $n_0$ can be easily derived from the specific form of the step sequece. Hence we analyse the initial phase ($n< n_0$) and later phase $n\ge n_0$ separately as follows:
\begin{align*}
\E\l z_n\r
	\le& \frac{\sum_{k=1}^{n_0} \E\l\theta_k - \theta^*\r}{n}
			+ \frac{\sum_{k=n_0 +1}^n \E\l \theta_k - \theta^*\r}{n}
\end{align*}
The last term on RHS above is bounded using Theorem \ref{thm:td-rate}, while the first term is bounded by unrolling TD(0) recursion for the first $n_0$ steps and bounding the individual terms that arise using (A6). 
\begin{proof}\textbf{\textit{(Expectation bound in Theorem \ref{thm:td-avg-rate})}}
Let $n>n_0$. Then, with $\gamma_n = c_0\left(c/(c+n)\right)^{-\alpha}$, from the statement of Proposition \ref{thm:high-probability-bound-td}, we have
\begin{align*}
\E&\l \theta_n - \theta^* \r
	\le e^{-\frac{\mu c^\alpha (1-\beta)c_0}{2(1-\alpha)}((n+c)^{1-\alpha}-(c+n_0)^{1-\alpha})}
					(\l \theta_0 - \theta^*\r + C) \\
		&\quad +\left( 1+ \l\theta^*\r\right)^\frac{1}{2}
										\left( \sum_{k = n_0}^n c_0^2
															\left(\frac{c}{k+c+1}\right)^{2\alpha}
															2e^{-\frac{\mu(1-\beta)c_0c^\alpha}{1-\alpha}
															((n+c)^{1-\alpha} - (k+1+c)^{1-\alpha}}
															\right)^{\frac{1}{2}}\\
		&\quad +C^\frac{1}{2}
										\left( \sum_{k = n_0}^n c_0
															\left(\left(\frac{c}{k+c}\right)^{\alpha}
																	- \left(\frac{c}{k+c+1}\right)^{\alpha}\right)
															2e^{-\frac{\mu(1-\beta)c_0c^\alpha}{1-\alpha}
															((n+c)^{1-\alpha} - (k+1+c)^{1-\alpha}}
															\right)^{\frac{1}{2}}\\
	\le& e^{-\frac{\mu c^\alpha (1-\beta)c_0}{2(1-\alpha)}((n+c)^{1-\alpha}-(c+n_0)^{1-\alpha})}
		\Bigg[(\l \theta_0 - \theta^*\r+C)\\
					&\quad +\left[\left( 1+ \l\theta^*\r\right)^\frac{1}{2} c^\alpha c_0
											+(Cc_0c^{\alpha})^\frac{1}{2}\right]
									\left\{\int_{0}^{n}x^{-2\alpha}
										2e^{\frac{\mu(1-\beta)c_0c^\alpha}{1-\alpha} x^{1-\alpha}}dx
										\right\}^{\frac{1}{2}}
										\Bigg]\\
	\le& e^{-\frac{\mu c^\alpha (1-\beta)c_0}{2(1-\alpha)}((n+c)^{1-\alpha}-(c+n_0)^{1-\alpha})}
		\Bigg[(\l \theta_0 - \theta^*\r+C)\\
					&\quad +\left[\left( 1+ \l\theta^*\r\right)^\frac{1}{2} c^\alpha c_0
											+(Cc_0c^{\alpha})^\frac{1}{e}\right]
									\left\{\left(\frac{\mu(1-\beta)c_0c^\alpha}{1-\alpha}\right)^{-2\alpha}
									\int_{0}^{\left(\frac{\mu(1-\beta)c_0c^\alpha}{1-\alpha}\right)^{1/(1-\alpha)}n}
												y^{-2\alpha}2 e^{y^{1-\alpha}}dy
												\right\}^{\frac{1}{2}}
												\Bigg]\\
	\le& e^{-\frac{\mu c^\alpha (1-\beta)c_0}{2(1-\alpha)}((n+c)^{1-\alpha}-(c+n_0)^{1-\alpha})}
		\Bigg[(\l \theta_0 - \theta^*\r+C)\\
					&\quad +\left[\left( 1+ \l\theta^*\r\right)^\frac{1}{2} c^\alpha c_0
											+(Cc_0c^{\alpha})^\frac{1}{2}\right]
									\Bigg\{\left(\frac{\mu(1-\beta)c_0c^\alpha}{1-\alpha}\right)^{-2\alpha}\\
	&\quad\qquad\qquad\qquad\qquad
									.\int_{0}^{\left(\frac{\mu(1-\beta)c_0c^\alpha}{1-\alpha}\right)^{1/(1-\alpha)}n}
												((1-\alpha)y^{-2\alpha} - \alpha y^{-(1+\alpha)})
												2e^{y^{1-\alpha}}dy
													\Bigg\}^{\frac{1}{2}}
													\Bigg]\\
	\le& e^{-\frac{\mu c^\alpha (1-\beta)c_0}{2(1-\alpha)}((n+c)^{1-\alpha}-(c+n_0)^{1-\alpha})}
								(\l \theta_0 - \theta^*\r+C)\\
	&\quad\qquad\qquad\qquad\qquad
									+ \left[\left( 1+ \l\theta^*\r\right)^\frac{1}{2} c^\alpha c_0
											+(Cc_0c^{\alpha})^\frac{1}{2}\right]
										\left(\frac{\mu(1-\beta)c_0c^\alpha}{1-\alpha}\right)^{-\alpha\frac{1+2\alpha}{2(1-\alpha)}}
										(n+c)^{-\frac{\alpha}{2}}
\end{align*}
So, recalling that
\begin{align*}
\theta_{n_0} - \theta^*
	= \left(\prod_{k = 1}^{n_0}(1+\gamma_{k}a_n) \right)\theta_{0}
			+ \sum_{i=1}^d\left[\sum_{k = 1}^{n_0} \alpha_i(\phi(s_k))r(s_k,\pi(s_k))
							\gamma_k\left(\prod_{j = k}^n(1+\gamma_{j}a_j)\right)\right]e_i - \theta^*
\end{align*}
we deduce that
\begin{align*}
&\E\l \bar\theta_n - \theta^*\r\\
	\le& \frac{\sum_{k=1}^{n_0} \E\l\theta_k - \theta^*\r}{n}
			+ \frac{\sum_{k=n_0 +1}^n \E\l \theta_k - \theta^*\r}{n}\\
	\le& \frac{n_0 \left[(1+dc_0c^{\alpha}(c+n_0)^{1-\alpha})
							e^{2(1+\beta)c_0c^{\alpha}(c+n_0)^{1-\alpha})}
							+\|\theta^*\|\right]}{n}
			+ \frac{\sum_{k=n_0 +1}^n \E\l \theta_k - \theta^*\r}{n}\\
	\le&\frac{n_0 \left[(1+dc_0c^{\alpha}(c+n_0)^{1-\alpha})
							e^{2(1+\beta)c_0c^{\alpha}(c+n_0)^{1-\alpha})}
							+\|\theta^*\| \right]}{n}\\
			&+ \frac{(1+dc_0c^{\alpha}(c+n_0)^{1-\alpha})
							e^{2(1+\beta)c_0c^{\alpha}(c+n_0)^{1-\alpha})}+\|\theta^*\|+C}{n}
				\sum_{k=1}^{\infty}
						e^{-\frac{\mu c^\alpha (1-\beta)c_0}{2(1-\alpha)}((k+c)^{1-\alpha}-(c+n_0)^{1-\alpha})}\\
			&+  \left[\left( 1+ \l\theta^*\r\right)^\frac{1}{2} c^\alpha c_0
											+(Cc_0c^{\alpha})^\frac{1}{2}\right]
				\left(\frac{\mu(1-\beta)c_0c^\alpha}{1-\alpha}\right)^{-\alpha\frac{1+2\alpha}{2(1-\alpha)}}
										(n+c)^{-\frac{\alpha}{2}}
\end{align*}
\end{proof}

%%%%%%%%%%%%%%%%
%%%%%%%%%%%%%%%%%%%%%%%%%%%%%%%%%%%%%%%%%%%%%%%%%%%%%%%%%%%%%%
%%%%%%%%%%%%%%%%%%%%%%%%%%%%%%%%%%%%%%%%%%%%%%%%%%%%%%%%%%%%%%

\subsection{TD(0) with centering: Proof of Theorem \ref{thm:ctd-bound}}
\label{sec:ctd-proof}
\begin{proof}
The proof proceeds along the following steps:

\paragraph{Step 1: One-step expansion of the recursion \eqref{eq:ctd-update}.}
\begin{align}
&\|\theta_{n+1} - \theta^*\|_2^2\\
	=& \|\Upsilon\left[(\theta_{n} - \theta^*) + \gamma(f_{X_{i_n}}(\theta_n) - f_{X_{i_n}}(\bar\theta^{(m)}) + \E(f_{X_{i_n}}(\bar\theta^{(m)})\mid \F_n)\right]\|_2^2\nonumber\\
	\le& \|(\theta_{n} - \theta^*) + \gamma(f_{X_{i_n}}(\theta_n) - f_{X_{i_n}}(\bar\theta^{(m)}) + \E(f_{X_{i_n}}(\bar\theta^{(m)})\mid \F_n)\|_2^2\nonumber\\
 \le&\quad \|\theta_{n} - \theta^*\|_2^2 + 2 \gamma (\theta_{n} - \theta^*)\tr\left[f_{X_{i_n}}(\theta_n) - f_{X_{i_n}}(\bar\theta^{(m)}) + \E(f_{X_{i_n}}(\bar\theta^{(m)})\mid \F_n)\right] \nonumber\\
 &\quad+ \gamma^2 \left[\l f_{X_{i_n}}(\theta_n) - f_{X_{i_n}}(\bar\theta^{(m)}) + \E(f_{X_{i_n}}(\bar\theta^{(m)})\mid \F_n) \r^2\right] \label{eq:ctd-step1}
% =&\quad\|\theta_{n} - \theta^*\|_2^2 + 2 \gamma (\theta_{n} - \theta^*)\tr\E_{\Psi,\theta_n}(f_{X_{i_n}}(\theta_{n-1}) \\
% &+ \gamma^2 \E_{\Psi,\theta_n}\left[\l f_{X_{i_n}}(\theta_n) - f_{X_{i_n}}(\bar\theta^{(m)}) + \E_{\Psi,\theta_n}(f_{X_{i_n}}(\bar\theta^{(m)})) \r^2\right] + \gamma^2 \E_{\theta_n}\l \epsilon_n\r^2 \\
\end{align}
where $\Upsilon(\cdot)$ denotes the orthogonal projection onto the $H$-ball.
\paragraph{Step 2: Bounding the variance using the centering term.}\ \\
Let
\begin{align*}
\bar f_{X_{i_n}}(\theta_n) : =& f_{X_{i_n}}(\theta_n) - f_{X_{i_n}}(\bar\theta^{(m)}) + \E(f_{X_{i_n}}(\bar\theta^{(m)})\mid \F_n)\\
=& f_{X_{i_n}}(\theta_n) - f_{X_{i_n}}(\theta^*) 
- ( f_{X_{i_n}}(\bar\theta^{(m)}) -f_{X_{i_n}}(\theta^*)) + \E(f_{X_{i_n}}(\bar\theta^{(m)}) - f_{X_{i_n}}(\theta^*)\mid \F_n)\\
&+(\E(f_{X_{i_n}}(\theta^*)\mid \F_n) - \E_{\Psi,\theta_n}(f_{X_{i_n}}(\theta^*)))
\end{align*}
where we have used that $\E_{\Psi,\theta_n}(f_{X_{i_n}}(\theta^*)) = 0$.
Then, letting
\begin{align*}
e_n(\theta):= \E\left[\left. f_{X_{i_n}}(\theta)\right|\F_n\right] - \E_{\Psi,\theta_n}\left[ f_{X_{i_n}}(\theta)\right]
\end{align*}
we have,
\begin{align}
 \E&\left(\l \bar f_{X_{i_n}}(\theta_n) \r^2 \mid \F_n\right)\nonumber \\
   &\le  \E\bigg(\l f_{X_{i_n}}(\theta_n) - f_{X_{i_n}}(\theta^*)\r^2\nonumber \\
    &\qquad+ \l( f_{X_{i_n}}(\bar\theta^{(m)}) -f_{X_{i_n}}(\theta^*)) + \E(f_{X_{i_n}}(\bar\theta^{(m)}) - f_{X_{i_n}}(\theta^*)\mid \F_n)\r^2 + \l e_n(\theta^*) \r^2\mid\F_n\bigg)\nonumber\\
   &\le  \E\left(\left.\l f_{X_{i_n}}(\theta_n) - f_{X_{i_n}}(\theta^*)\r^2\right| \F_n\right) + \E\left(\left.\l f_{X_{i_n}}(\bar\theta^{(m)}) - f_{X_{i_n}}(\theta^*)\r^2\right| \F_n\right)+\E\left(\l e_n(\theta^*) \r^2\mid\F_n\right),\label{eq:a-appendix}
\end{align}
where we have used that for any random variable $X$, $\E(\| X - \E X\|^2) \le \E(\|X\|^2)$.

Let
\begin{align*}
\epsilon_n(\theta) = \E\left(\left.\l f_{X_{i_n}}(\theta) - f_{X_{i_n}}(\theta^*)\r^2\right| \F_n\right) - \E_{\Psi,\theta_n}(\l f_{X_{i_n}}(\theta) - f_{X_{i_n}}(\theta^*)\r^2).
\end{align*}
The second term in $\epsilon_n(\theta)$ can be bounded as follows:
For any $\theta$, we have
\begin{align}
 &\E_{\Psi,\theta_n}(\l f_{X_{i_n}}(\theta) - f_{X_{i_n}}(\theta^*)\r^2)\nonumber\\
 =& (\theta - \theta^*)\tr \E_{\Psi,\theta_n}\left[ (\beta\phi(s_{n+1})-\phi(s_{n}))\phi(s_{n})\tr \phi(s_{n})(\beta\phi(s_{n+1})-\phi(s_{n}))\tr \right](\theta - \theta^*)\nonumber\\
 =& (\theta - \theta^*)\tr \left(\Phi\tr(I - \beta P)\Psi \Phi \right)\tr \Phi\tr\Psi(I - \beta P) \Phi (\theta - \theta^*)\nonumber\\
 \le&  (\theta - \theta^*)\tr \left(\Phi\tr\Psi\Phi \right)\tr \Phi\tr\Psi\Phi (\theta - \theta^*)\nonumber\\
 \le& d^2\|\Phi (\theta - \theta^*)\|_{\Psi}^2.
\end{align}
In the final inequality, we have used $\l \Phi\tr \Psi \Phi\r \le d^2$.

Plugging the above in \eqref{eq:a-appendix}, we obtain
\begin{align*}
 \E\left(\left.\l \bar f_{X_{i_n}}(\theta_n)\r^2\right| \F_n\right)
   \le&  d^2\left( \|\Phi (\theta_n - \theta^*)\|_{\Psi}^2 + \|\Phi (\bar\theta^{(m)} - \theta^*)\|_{\Psi}^2\right)\\
   &+ \epsilon_n(\theta_n) + \epsilon_n(\bar\theta^{(m)}) +\E\left(\l e_n(\theta^*) \r^2\mid\F_n\right)
\end{align*}

\paragraph{Step 3: Analysis for a particular epoch.}\ \\
Using the last display, we calculate,
\begin{align}
\E&\left(\left. \|\theta_{n+1} - \theta^*\|_2^2\right| \F_n\right)
	\le\|\theta_{n} - \theta^*\|_2^2
		+ 2 \gamma (\theta_{n} - \theta^*)\tr\E\left[\left.\bar f_{X_{i_n}}(\theta_n)\right|\F_n\right] \nonumber\\
	&\qquad + \gamma^2 \bigg(d^2\left( \|\Phi (\theta_n - \theta^*)\|_{\Psi}^2
													+ \|\Phi (\bar\theta^{(m)} - \theta^*)\|_{\Psi}^2\right)
				+ \epsilon_n(\theta_n) + \epsilon_n(\bar\theta^{(m)})
													+ \E\left(\l e_n(\theta^*) \r^2\mid\F_n\right)\bigg) \nonumber\\
	&= \|\theta_{n} - \theta^*\|_2^2
		+ 2 \gamma (\theta_{n} - \theta^*)\tr\E\left[\left. f_{X_{i_n}}(\theta_n)\right|\F_n\right] \label{eq:a1}\\
	&\qquad + \gamma^2 \bigg(d^2\left( \|\Phi (\theta_n - \theta^*)\|_{\Psi}^2
													+ \|\Phi (\bar\theta^{(m)} - \theta^*)\|_{\Psi}^2\right)
		+ \epsilon_n(\theta_n) + \epsilon_n(\bar\theta^{(m)})
													+\E\left(\l e_n(\theta^*) \r^2\mid\F_n\right)\bigg). \nonumber
\end{align}
The equality above uses the fact that $\E\left[\left. \bar f_{X_{i_n}}(\theta_n)\right|\F_n\right] = \E\left[\left. f_{X_{i_n}}(\theta_n)\right|\F_n\right]$, since
$$f_{X_{i_n}}(\bar\theta^{(m)}) - \E\left[\left. f_{X_{i_n}}(\bar\theta^{(m)})\right|\F_n\right]$$
is a martingale difference w.r.t. $\F_n$.

We can rewrite \eqref{eq:a1} as
\begin{align}
\E&\left(\left. \|\theta_{n+1} - \theta^*\|_2^2\right| \F_n\right)
	\le\quad \|\theta_{n} - \theta^*\|_2^2
		+ 2 \gamma (\theta_{n} - \theta^*)\tr\E_{\Psi,\theta_n}\left[f_{X_{i_n}}(\theta_n)\right]
		+ 2 \gamma (\theta_{n} - \theta^*)\tr e_n(\theta_n)\nonumber\\
	&+ \gamma^2 \left(d^2\left( \|\Phi (\theta_n - \theta^*)\|_{\Psi}^2
											+ \|\Phi (\bar\theta^{(m)} - \theta^*)\|_{\Psi}^2\right)
											+ \epsilon_n(\theta_n) + \epsilon_n(\bar\theta^{(m)})
											+\E\left(\l e_n(\theta^*) \r^2\mid\F_n\right)\right) \label{eq:a2}
\end{align}

Notice that $\E_{\Psi,\theta_n}\left[f_{X_{i_n}}(\theta^*)\right]=0$ and hence
$$
\E_{\Psi,\theta_n}\left[f_{X_{i_n}}(\theta_n)\right] = \E_{\Psi,\theta_n}( (\beta\phi(s_{n+1})-\phi(s_{n}))\phi(s_{n})\tr) (\theta_n - \theta^*) = (\Phi\tr \Psi (I-\beta P) \Phi)(\theta_n-\theta^*).
$$ 

Using the above, we can simplify \eqref{eq:a2} as follows:
\begin{align}
&\E\left(\left. \|\theta_{n+1} - \theta^*\|_2^2\right| \F_n\right)
\nonumber\\
\le&\|\theta_{n} - \theta^*\|_2^2+ \gamma^2 d^2\left( \|\Phi (\theta_n - \theta^*)\|_{\Psi}^2 
																+ \|\Phi (\bar\theta^{(m)} - \theta^*)\|_{\Psi}^2\right)
	  - 2 \gamma (\theta_{n} - \theta^*)\tr\left[\Phi\tr \Psi (I-\beta P) \Phi\right]\nonumber\\
	& + 2 \gamma (\theta_{n} - \theta^*)\tr e_n(\theta_n)
	  + \gamma^2\epsilon_n(\theta_n) + \gamma^2\epsilon_n(\bar\theta^{(m)})
	  +\E\left(\l e_n(\theta^*) \r^2\mid\F_n\right)\nonumber\\
%%%%%%%%%%%%%%%%
\le& \|\theta_{n} - \theta^*\|_2^2
	- 2\gamma( (1-\beta) - \dfrac{d^2\gamma}{2})\|\Phi (\theta_n - \theta^*)\|_{\Psi}^2 
	+ \gamma^2 d^2 \|\Phi (\bar\theta^{(m)} - \theta^*)\|_{\Psi}^2\nonumber\\
	& + 2 \gamma (\theta_{n} - \theta^*)\tr e_n(\theta_n)
	+ \gamma^2\epsilon_n(\theta_n) + \gamma^2\epsilon_n(\bar\theta^{(m)})
	+\E\left(\l e_n(\theta^*) \r^2\mid\F_n\right) \label{eq:a3}
\end{align}
Unrolling the above recursion within an epoch, i.e., from $n=(m+1)M -1$ to $n=mM$, we obtain
\begin{align}
\E&\left(\left. \|\theta_{(m+1)M-1} - \theta^*\|_2^2\right| \F_n\right)
	\le\|\theta_{mM} - \theta^*\|_2^2 \nonumber\\
		&- \sum_{n=mM}^{(m+1)M -2} 2\gamma( (1-\beta) - \dfrac{d^2\gamma}{2})
							\E\left(\left.\|\Phi (\theta_n - \theta^*)\|_{\Psi}^2\right| \F_{mM-1}\right)
		  + M \gamma^2 d^2 \|\Phi (\bar\theta^{(m)} - \theta^*)\|_{\Psi}^2\nonumber\\
		&+ \E\left(\left.
				\sum_{n=mM}^{(m+1)M -2} 2 \gamma (\theta_{n} - \theta^*)\tr e_n(\theta_n)
								+ \gamma^2\left(\epsilon_n(\theta_n)+ \epsilon_n(\bar\theta^{(m)})
								+\E\left(\l e_n(\theta^*) \r^2\mid\F_n\right)\right) \right| \F_{mM-1}\right) \label{eq:a4}
\end{align}

Notice that 
$(\bar\theta^{(m)} - \theta^*)\tr I (\bar\theta^{(m)} - \theta^*) \le \dfrac{1}{\mu}(\bar\theta^{(m)} - \theta^*)\tr \Phi\tr \Psi \Phi (\bar\theta^{(m)} - \theta^*)$ and hence we obtain the following by setting $\theta_{mM} = \bar\theta^{(m)}$ and ignoring the non-negative LHS term:
\begin{align*}
2 \gamma M & ((1-\beta) - \dfrac{d^2 \gamma}{2})
	\E\left(\left. \|\Phi (\bar\theta^{(m+1)} - \theta^*)\|_{\Psi}^2 \right| \F_{mM}\right) \\
\le&\left(\dfrac{1}{\mu} + M\gamma^2 d^2\right) \|\bar\theta^{(m)} - \theta^*\|_2^2
		+ \gamma^2\sum_{n=mM}^{(m+1)M -2}
			\E\left(\left.\epsilon_n(\theta_n) + \epsilon_n(\bar\theta^{(m)})
			+\l e_n(\theta^*) \r^2 \right| \F_{mM}\right)\\ 
	& + \E\left(\left.2 \gamma \sum_{n=mM}^{(m+1)M -1}(\theta_{n} - \theta^*)\tr e_n(\theta_n)\right| \F_{mM}\right)
\end{align*}

 \paragraph{Step 4: Combining across epochs.}\ \\
Let $C_1:=\left(\dfrac{1}{2 \mu\gamma M ((1-\beta) - \dfrac{d^2\gamma}{2})} + \dfrac{\gamma d^2}{2 ((1-\beta) - \dfrac{d^2\gamma}{2})}\right)$. Then, we have
\pagebreak
\begin{align*}
%%%%%%%%%%%%%%%%%%%%%Here
&\E\left(\left. \|\Phi (\bar\theta^{(m)} - \theta^*)\|_{\Psi}^2 \right| \F_{0}\right) \\
	\le&\quad  C_1^m \|\bar\theta^{(0)} - \theta^*\|_2^2 + \frac{\gamma C_2}{2} \sum_{k=1}^{m-1} C_1^{(m-2-k)}
				\sum_{i=(k-1)M}^{kM -1}\E\left(\left.\epsilon_i(\theta_i) + \epsilon_i(\bar\theta^{((k-1)M)})
																	+\l e_i(\theta^*) \r^2\right| \F_{0}\right)\\ 
		&+  C_2 \sum_{k=1}^{m-1} C_1^{(m-2-k)}
				\sum_{i=(k-1)M}^{kM -1} \E\left(\left.(\theta_{i} - \theta^*)\tr e_i(\theta_i)\right| \F_{0}\right),
\end{align*}
where $C_2= \dfrac{\gamma}{2 M ((1-\beta) - \dfrac{d^2\gamma}{2})}$.

 \paragraph{Step 5: Controlling the error terms.}\ \\
 First note that
 \begin{align*}
 \epsilon(\theta)
 	=& \E\left( \l (\beta\phi(s_n)\phi(s_{n+1})\tr - \phi(s_n)\phi(s_n))(\theta - \theta^*)\tr\r^2\mid\F_n\right)\\
 	&- \E_{\Psi,\theta_n}\left( \l (\beta\phi(s_n)\phi(s_{n+1})\tr - \phi(s_n)\phi(s_n))(\theta - \theta^*)\tr\r^2\right)\\
 	=& (\theta - \theta^*)\tr\left[ \E(v\tr v\mid \F_n) - \E_{\Psi,\theta_n}(v\tr v) \right](\theta - \theta^*)
 \end{align*}
 where $v = \beta\phi(s_n)\phi(s_{n+1})\tr - \phi(s_n)\phi(s_n)$. Supposing that $\l (\theta - \theta^*) \r\le H$, and using that $\l \phi(s_n) \r \le 1$ together with the convexity of matrix inner products in the $\l\cdot\r$-matrix norm, we have that
 \begin{align*}
 \l\epsilon(\theta)\r
 	\le 2H \l \E(v\mid \F_n) - \E_{\Psi,\theta_n}(v) \r
 \end{align*}
 So by (A6), and the fact the projection step of the algorithm, we have 
 \begin{align}\label{eqn:epsilon_error}
 \sum_{i=(k-1)M}^{kM -1}
 	\E\left(\left.\l\epsilon_i(\theta_i) + \epsilon_i(\bar\theta^{((k-1)M)}) \r^2\right| \F_{0}\right) \le 8HB_{(k-1)M}^{kM}(s_0).
 \end{align}
 Similarly
 \begin{align*}
 \sum_{i=(k-1)M}^{kM -1}\E\left(\left.\l e_i(\theta^*) \r^2\right| \F_{0}\right),
 \sum_{i=(k-1)M}^{kM -1} \E\left(\left.(\theta_{i} - \theta^*)\tr e_i(\theta_i)\right| \F_{0}\right)
 \le 2HB_{(k-1)M}^{kM}(s_0)
 \end{align*}
\end{proof}

%%%%%%%%%%%%%%%%%%%%%%%%%%%%%%%%%%%%%%%%%%%%%%%%%%%%%%%%%%%%%%
%%%%%%%%%%%%%%%%%%%%%%%%%%%%%%%%%%%%%%%%%%%%%%%%%%%%%%%%%%%%%%
%%%%%%%%%%%%%%%%%%%%%%%%%%%%%%%%%%%%%%%%%%%%%%%%%%%%%%%%%%%%%%
\section{Numerical Experiments}
\label{sec:experiments}
\newcommand{\errorband}[5][]{ % x column, y column, error column, optional argument for setting style of the area plot
\pgfplotstableread[col sep=comma, skip first n=2]{#2}\datatable
    % Lower bound (invisible plot)
    \addplot [draw=none, stack plots=y, forget plot] table [
        x={#3},
        y expr=\thisrow{#4}-\thisrow{#5}
    ] {\datatable};

    % Stack twice the error, draw as area plot
    \addplot [draw=none, fill=gray!40, stack plots=y, area legend, #1] table [
        x={#3},
        y expr=2*\thisrow{#5}
    ] {\datatable} \closedcycle;

    % Reset stack using invisible plot
    \addplot [forget plot, stack plots=y,draw=none] table [x={#3}, y expr=-(\thisrow{#4}+\thisrow{#5})] {\datatable};
}
 \begin{figure}
    \centering
    \begin{tabular}{c}
    \subfigure[Example 1]
    {
      \tabl{c}{\scalebox{1.0}{\begin{tikzpicture}
      \begin{axis}[
        width=10cm,
        height=9cm,
	xlabel={iterations},
	ylabel={Normalized value diff},
       legend entries={
	 ,
        TD(0),
        ,
        TD(0)-Avg
        },
        legend pos=north east,
      ]

      % TD
      \errorband[red!90!white, opacity=0.3]{results/Exp1Results_filtered.csv}{0}{1}{2}
      \addplot [thick, red] table [x index=0, y index=1,col sep=comma] {results/Exp1Results_filtered.csv};
      % Averaging
      \errorband[blue!90!white, opacity=0.3]{results/Exp1Results_filtered.csv}{0}{3}{4}
      \addplot [thick, blue] table [x index=0, y index=3,col sep=comma] {results/Exp1Results_filtered.csv};
      \end{axis}
      \end{tikzpicture}}\\}
      \label{fig:example1} 
    }
    \\
        \subfigure[Example 2]
    {
      \tabl{c}{\scalebox{1.0}{\begin{tikzpicture}
      \begin{axis}[
	width=10cm,
        height=9cm,
        xlabel={iterations},
	ylabel={Normalized value diff},
       legend entries={
	 ,
        TD(0),
        ,
        TD(0)-Avg,
        ,
        CTD
        },
        legend pos=north east,
      ]

      % TD
      \errorband[red!90!white, opacity=0.3]{results/Exp2Results_subsampled100.csv}{0}{1}{2}
      \addplot [thick, red] table [x index=0, y index=1,col sep=comma] {results/Exp2Results_subsampled100.csv};
      % Averaging
      \errorband[blue!70!white, opacity=0.3]{results/Exp2Results_subsampled100.csv}{0}{3}{4}
      \addplot [thick, blue] table [x index=0, y index=3,col sep=comma] {results/Exp2Results_subsampled100.csv};
      % CTD
      \errorband[green!70!white, opacity=0.3]{results/Exp2Results_subsampled100.csv}{0}{5}{6}
      \addplot [thick, green!50!black] table [x index=0, y index=5,col sep=comma] {results/Exp2Results_subsampled100.csv};
      \end{axis}
      \end{tikzpicture}}\\}
      \label{fig:example2} 
    }
\end{tabular}
     \caption{Empirical illustration of TD(0), TD(0) with averaging and CTD algorithms. The normalised value difference is defined to be $\| \Phi(\theta_n - \theta^*)\|_\Psi/\|\Phi(\theta^*)\|_\Psi$.}
    \label{fig:all_results}
\end{figure}
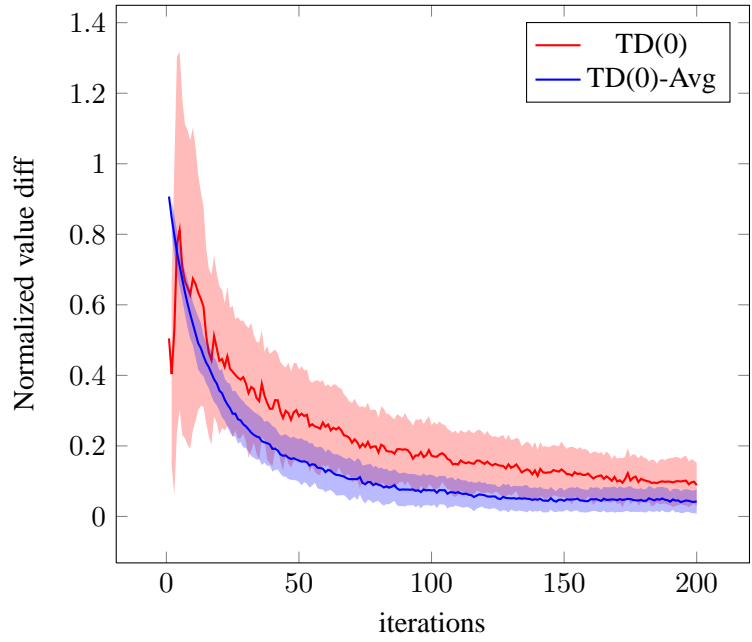
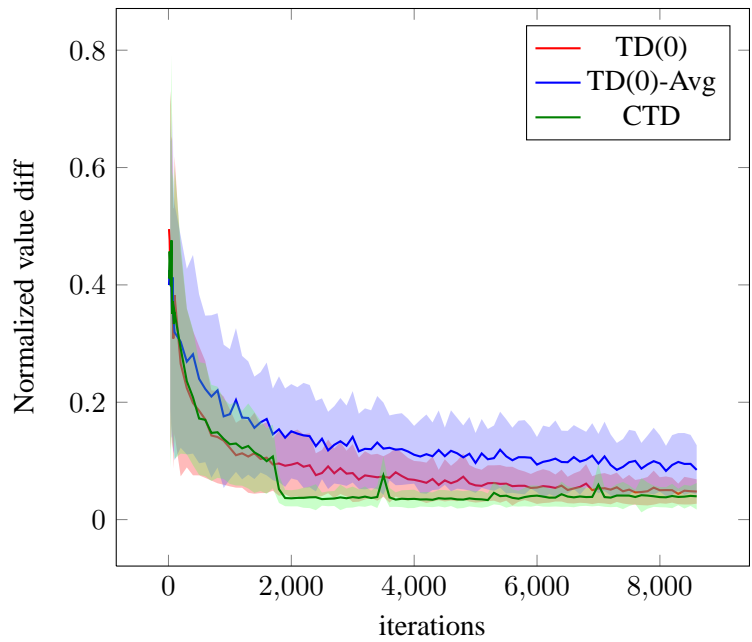
We test the performance of TD(0), TD(0) with averaging and CTD algorithms. 
\paragraph{Example 1.} This is a two-state toy example, which is borrowed from \citep{yu2009convergence}. The setting has the transition structure $P= \left[ 0.2, 0.8; 0.3,0.7\right]$
%$P = \left[
%\begin{array}{cc}
%0.2 & 0.8\\
%0.3 & 0.7
%\end{array}
%\right]
%$
and the rewards given by $r(1,j)=1, r(2,j)=2$, for $j=1,2$.  The features are one-dimensional, i.e., $\Phi = (1\quad 2)\tr$. 

Fig. \ref{fig:example1} presents the results obtained on this example. For setting the step-sizes of TD(0), we used the guideline from Theorem \ref{thm:td-rate}. Note that this results in convergence for TD(0), with the caveat that setting the step-size constant $c$ requires knowledge of underlying transition structure through $\mu$. It is evident that TD(0) with averaging gives performance on par with TD(0) and unlike TD(0), the setting of $c$ is not constrained here.
Given that convergence is rapid for TD(0) on this example, we do not plot CTD in Fig \ref{fig:example1} as the epoch length suggested by Theorem \ref{thm:ctd-bound} is $100$ and this is already enough for TD(0) itself to converge. CTD resulted in a normalized value difference of about $0.03$ on this example, but the effect of averaging across epochs for CTD will be seen better in the next example.

\paragraph{Example 2.} Here the number of states are $100$, the transitions are governed by a random stochastic matrix and the rewards are random and bounded between $0$ and $1$. Features are $3$-dimensional and are picked randomly in $(0,1)$. The  results obtained for the three algorithms are presented in Fig. \ref{fig:example2}. It is evident that all algorithms converge, with CTD showing the lowest variance. As in example 1, the setting parameters for TD(0) was dictated by Theorem \ref{thm:td-rate}, while for CTD, the step-size and epoch length were set such that the constant $C_1$ in Theorem \ref{thm:ctd-bound} is less than $1$.
%%%%%%%%%%%%%%%%%%%%%%%%%%%%%%%%%%%%%%%%%%%%%%%%%%%%%%%%%%%%%%
%%%%%%%%%%%%%%%%%%%%%%%%%%%%%%%%%%%%%%%%%%%%%%%%%%%%%%%%%%%%%%
%%%%%%%%%%%%%%%%%%%%%%%%%%%%%%%%%%%%%%%%%%%%%%%%%%%%%%%%%%%%%%
\section{Conclusions}
\label{sec:conclusions}
TD(0) with linear function approximators is a well-known policy evaluation algorithm.
While asymptotic convergence rate results are available for this algorithm, there are no finite-time bounds that quantify the rate of convergence. In this paper, we derived non-asymptotic bounds, both in high-probability as well as in expectation. From our results, we observed that iterate averaging is necessary to obtain the optimal $O\left(1/\sqrt{n}\right)$ rate of convergence. This is because, to obtain the optimal rate with the classic step-size choice $\propto 1/n$, it is necessary to know properties of the stationary distribution of the underlying Markov chain. We also proposed a fast variant of TD(0) that incorporates a centering sequence and established that the rate of convergence of this algorithm is exponential. We established the practicality of our bounds by using them to guide the step-size choices in two synthetic experimental setups.
%%%%%%%%%%%%%%%%%%%%%%%%%%%%%%%%%%%%%%%%%%%%%%%%%%%%%%%%%%%%%%
%%%%%%%%%%%%%%%%%%%%%%%%%%%%%%%%%%%%%%%%%%%%%%%%%%%%%%%%%%%%%%
%%%%%%%%%%%%%%%%%%%%%%%%%%%%%%%%%%%%%%%%%%%%%%%%%%%%%%%%%%%%%%
%%%%%%%%%%%%%%%%%%%%%%%%%%%%%%%%%%%%%%%%%%%%%%%%%%%%%%%%%%%%%%
%%%%%%%%%%%%%%%%%%%%%%%%%%%%%%%%%%%%%%%%%%%%%%%%%%%%%%%%%%%%%%

%\paragraph{\bf Acknowledgments}
%The first author was gratefully supported by the EPSRC Autonomous Intelligent Systems project EP/I011587.
%The second author would like to thank the European Community's Seventh Framework Programme (FP$7/2007-2013$) under grant agreement n$^o$ $270327$ for funding the research leading to these results.

%%%%%%%%%%%%%%%%%%%%%%%%%%%%%%%%%%%%%%%%%%%%%%%%%%%%%%%%%%%%%%
%%%%%%%%%%%%%%%%%%%%%%%%%%%%%%%%%%%%%%%%%%%%%%%%%%%%%%%%%%%%%%
%%%%%%%%%%%%%%%%%%%%%%%%%%%%%%%%%%%%%%%%%%%%%%%%%%%%%%%%%%%%%%
%%%%%%%%%%%%%%%%%%%%%%%%%%%%%%%%%%%%%%%%%%%%%%%%%%%%%%%%%%%%%%
%%%%%%%%%%%%%%%%%%%%%%%%%%%%%%%%%%%%%%%%%%%%%%%%%%%%%%%%%%%%%%

\clearpage
\newpage
\bibliographystyle{plainnat}
\bibliography{sgd}

%%%%%%%%%%%%%%%%%%%%%%%%%%%%%%%%%%%%%%%%%%%%%%%%%%%%%%%%%%%
%%%%%%%%%%%%%%%%%%%%%%%%%%%%%%%%%%%%%%%%%%%%%%%%%%%%%%%%%%%
%%%%%%%%%%%%%%%%%%%%%%%%%%%%%%%%%%%%%%%%%%%%%%%%%%%%%%%%%%%
%%%%%%%%%%%%%%%%%%%%%%%%%%%%%%%%%%%%%%%%%%%%%%%%%%%%%%%%%%%
%%%%%%%%%%%%%%%%%%%%%%%%%%%%%%%%%%%%%%%%%%%%%%%%%%%%%%%%%%%
\end{document}